\newtheorem{definition}{Definition}
\newtheoremstyle{theorem}% name of the style to be used
  {4pt}% Space above
{4pt}% Space below
{}% Body font
{}% Indent amount
{\bfseries}% Theorem head font
{.}% Punctuation after theorem head
{.5em}% Space after theorem head
{}% Theorem head spec (can be left empty, meaning ‘normal’)
\theoremstyle{theorem}
\newtheorem{thm}{Theorem}
\newtheorem{lem}[thm]{Lemma}
\newtheorem{corollary}[thm]{Corollary}
\DeclareMathOperator{\spn}{span}
\DeclareMathOperator{\Part}{partition}
\DeclareMathOperator{\vol}{vol}
\title[An Improved and Generalised Analysis for Spectral Clustering]{An Improved and Generalised Analysis for Spectral Clustering}
\author[G. Tyler \& L. Zanetti]{%
George Tyler\\
Department of Mathematical Sciences \\
University of Bath \\
\email{grlt20@bath.ac.uk}\And
Luca Zanetti \\
Department of Mathematical Sciences \\
University of Bath \\
\email{lz2040@bath.ac.uk}
}
\begin{document}
\newenvironment{sproof}{%
 \renewcommand{\proofname}{Sketch Proof}\proof}{\endproof}\textbf{}

\maketitle

\begin{abstract}
We revisit the theoretical performances of Spectral Clustering, a classical algorithm for graph partitioning that relies on the eigenvectors of a matrix representation of the graph. Informally, we show that Spectral Clustering works well as long as the smallest eigenvalues appear in groups well separated from the rest of the matrix representation's spectrum. 
This arises, for example, whenever there exists a hierarchy of clusters at different scales, a regime not captured by previous analyses.
Our results are very general and can be applied beyond the traditional graph Laplacian. In particular, we study Hermitian representations of digraphs and show Spectral Clustering can recover partitions where edges between clusters are oriented mostly in the same direction. This has applications in, for example, the analysis of trophic levels in ecological networks. We demonstrate that our results accurately predict the performances of Spectral Clustering on synthetic and real-world data sets.
% keywords: graph clustering, spectral methods, directed graphs
\end{abstract}

\section{Introduction}

Spectral Clustering \citep{ng2001spectral} is one of the most popular algorithms for clustering a graph. It exploits the eigenvectors of a matrix representing the graph to compute a low-dimensional Euclidean embedding of the vertices, which is then partitioned using geometric clustering algorithms, such as $k$-means.

While Spectral Clustering has been shown to work well in practice in a wide variety of settings, it still lacks a complete theoretical understanding. Typically, analyses of Spectral Clustering work by showing that the chosen eigenvectors are close to linear combinations of the indicator vectors of the clusters. For example, the structure theorem of \citet{peng2015partitioning}  informally says that the bottom $k$ eigenvectors of the graph Laplacian are close to linear combinations of the $k$ ``best'' clusters  whenever two conditions are satisfied: (1) the $k$-way expansion constant\footnote{The $k$-way expansion constant is a measure of quality of a partition~\citep{lee2014multiway} similar to the normalised cut \citep{shimalik}.} of the optimal partition is \emph{small}; (2) the $(k+1)$-st smallest eigenvalue of the Laplacian is \emph{large}. However, the first condition is often not satisfied in practice and, we argue, is actually not necessary for Spectral Clustering to perform well.

Our main contribution is a strengthening of the structure theorem: we show Spectral Clustering works well whenever the informative eigenvalues are well-separated from the rest of the spectrum. This not only allows us to recover  known bounds achieved by either the structure theorem or by classical perturbation arguments such as the Davis-Kahan  theorem \citep{davis1970rotation}, but also to obtain new improved bounds in many scenarios. In particular, our techniques are well-suited to deal with situations where there exists a hierarchy of clusters at different scales. We believe our results are the first to correctly predict the excellent performances of Spectral Clustering for a large class of real-world networks.

An additional advantage of our analysis is that it applies to any Hermitian positive semidefinite representation of a graph, beyond the standard graph Laplacian. We showcase this strength by analysing recently proposed Hermitian representations of digraphs \citep{cucuringu2020hermitian}. In particular, we consider a clustering problem  in which we aim to partition the vertices of a digraph into subsets $S_1,\dots,S_k$ so that most of the edges are from $S_i$ to $S_{i+1}$ for any $i \in \{1,\dots,k-1\}$. In other words, we would like most of the edges to follow a directed path between the clusters. This has applications, for example, in uncovering trophic levels in food chains or detecting patterns in trade networks \citep{laenen2020higher}. %\citep{laenen2020higher,james24}.
We provide a new cost function for this task and apply our structure theorem to obtain bounds that accurately predict the performances of Spectral Clustering on both synthetic and real-world data sets.

\subsection{Related work}
There is a wealth of literature on Spectral Clustering; in this section we discuss only the most relevant work on the subject and we refer the reader to the classical surveys \cite{fortunatosurvey,vonlux} for additional background. 

In the case of random graphs, and specifically stochastic block models, analyses of Spectral Clustering typically rely on the Davis-Kahan theorem \citep{davis1970rotation} or similar tools \citep{rohe11}. Classical perturbation arguments, however, are poorly suited to deal with non-random graphs in which ``noise'' might be localised in relatively small regions of the graph~\citep{phdzanetti}. %. 
For this reason, \citet{peng2015partitioning}  introduced their structure theorem, which states that the bottom $k$-eigenvectors of the normalised Laplacian of an undirected graph are close to the indicator vectors of the ``optimal'' $k$ clusters whenever the ratio $k^2 \cdot \rho(k)/\lambda_{k+1}$ is small, where $\rho(k)$ is the $k$-way expansion constant \citep{lee2014multiway}, and $\lambda_{k+1}$ is the $(k+1)$-th smallest eigenvalue of the Laplacian. A series of recent results \cite{kolev,mizutani}, culminating with \citep{macgregor2022tighter}, further simplified the proof of the structure theorem, while improving the dependency on the number of clusters $k$.

All of these results, however, require $\rho(k)$ to be small: this is often not satisfied in real-world networks nor in stochastic block models, even for a range of parameters where Spectral Clustering is known to work well. 
% \color{red} 
% Need changing. 
% \color{blue}
% Our results bypass this limitation by considering the ratio between $\rho(k)$ and the difference $\lambda_{k+1} - \lambda_2$: we do not require the $\rho(k)$ to be small, but just to be larger than this difference. This holds whenever the \emph{inner} expansion of the clusters is sufficiently stronger than their \emph{outer} expansion \citep{OGT}. We further generalise this argument to provide guarantees for Spectral Clustering whenever the informative eigenvalues of the graph Laplacian appear in groups, well-separated from the rest of the spectrum. This happens when there are well-defined clusters organised hierarchically.
% \color{black} \\
Consider for example a graph consisting of two cliques of 50 vertices connected by a perfect matching in which each edge has weight $20$. Spectral clustering perfectly partitions the graph by dividing the two cliques. If we apply the structure theorem of \citet{macgregor2022tighter}, however, we obtain a bound on the distance between the eigenvector of the Laplacian associated with $\lambda_2$ and the closest linear combination of indicator vectors of the clusters equals to $0.28$. This is because, even though there exists a large gap in the Laplacian spectrum, the value of $\rho(2)$ is relatively large ($\approx 0.28$). In contrast, our Corollary~\ref{cor:RemoveFirstEvec} will show that this distance is actually zero, correctly predicting the exact recovery of the clusters by spectral clustering.

Furthermore, our analysis is not restricted to the traditional (normalised) graph Laplacian and undirected graphs, but it can handle any Hermitian and positive semidefinite representation of undirected or directed graphs. Hermitian representations of digraphs were recently investigated for clustering purposes by \citet{cucuringu2020hermitian}, who proposed their use to recover clusters characterised by a strong imbalance in the direction of the inter-cluster edges. They define a directed stochastic block model that captures this problem and show  Spectral Clustering is able to recover its communities. \citet{laenen2020higher} consider the specific instance of this clustering task in which the direction of most edges is required to follow a directed path between the clusters. They apply the techniques of \citet{peng2015partitioning} to obtain a structure theorem and an analysis of Spectral Clustering for appositely-constructed Hermitian representations of digraphs. In our work, we revisit this task and argue that the results of Laenen and Sun do not capture the practical performances of Spectral Clustering. Indeed, we demonstrate simple examples where Spectral Clustering works very well, but Laenen and Sun's results are completely uninformative. Applying our improved structure theorem together with a new cost function allows us to prove bounds that correctly predict Spectral Clustering's practical performances, vastly outperforming Laenen and Sun's results.

\subsection{Organisation}
The paper is organised as follows: in Section~\ref{sec:background}, we cover the necessary background. In Section~\ref{sec:general}, we state our main result, which is a generalised and improved structure theorem. We show how our results allow us to obtain meaningful bounds for Spectral Clustering, even for graphs where the $k$-way expansion constant is relatively large. In Section~\ref{sec:digraphs}, we apply our generalised structure theorem to analyse Spectral Clustering on Hermitian Laplacians for digraphs. Proofs, together with additional experiments, are included in the Appendix. Code is available in a GitHub repository\footnote{\url{https://github.com/GeorgeRLTyler/Improved-and-Generalised-Analysis-for-Spectral-Clustering}}.

\section{Background}
\label{sec:background}
Let \(\mathcal{G} = (V,E,w)\) be a (possibly directed) graph with \(N\) vertices , \(M\) edges and weight function \(w: V \times V \rightarrow \mathbb{R}_{\geq 0}\). For any edge \(e = (u,v) \in E\), we write the weight of \(e\) as \(w_{uv}\) or \(w_e\). For an undirected graph \(\mathcal{G}\), we denote the degree of vertex \(u\) by \(d(u) = \sum_{v \in V} w_{uv}\). If \(\mathcal{G}\) is directed, we define the in-degree and out-degree of vertex \(u\) as \(d_{\text{in}}(u) = \sum_{v \in V} w_{vu}\) and \(d_{\text{out}}(u) = \sum_{v \in V} w_{uv}\). The degree of vertex \(u\) in this case is \(d(u) = d_{\text{in}}(u) + d_{\text{out}}(u)\). 
For any two sets \(S,T \in V\), we define the cut value \(w(S,T) = \sum_{\substack{(u,v) \in E \\ u \in S, v \in T}} w_{uv}\) and the volume of \(S \subset V\) as \(\text{vol}(S) = \sum_{u \in S} d(u)\). 

We define the conductance of \(\emptyset \ne S \subset V\) as
\[
\Phi(S) = \frac{w(S,V-S)}{\text{vol}(S)}.
\]
A \(k\)-way partition of \(V\) is a collection of \(k\) subsets \(S_1, \hdots, S_k \subset V\) where \(S_i \cap S_j = \emptyset\) for \(i \neq j\) and \(\bigcup_{i=1}^k S_i = V\).
For undirected graphs, we measure the quality of a partition by the \(k\)-way expansion:
\[
\Phi(S_1,\dots,S_k) = \max_{1 \leq i \leq k} \Phi(S_i).
\]
The \(k\)-way expansion constant of $\mathcal{G}$ \cite{lee2014multiway} is defined as
\[
\rho(k) = \min_{\text{partition } S_1, \hdots S_k} \Phi(S_1,\dots,S_k).
\]

Spectral clustering leverages eigenvalues and eigenvectors of Hermitian matrices associated with the graph. The adjacency matrix \(A \in \mathbb{R}^{N \times N}\) of an undirected graph \(\mathcal{G} = (V,E,w)\) is defined as $A_{uv} = w_{uv}$ if $(u,v) \in E$ and zero otherwise.
% \[
% A_{uv} = \begin{cases}
% w_{uv} & \text{if } (u,v) \in E, \\
% 0 & \text{otherwise}.
% \end{cases}
% \]
When \(\mathcal{G}\) is directed, we will use the Hermitian adjacency matrix as defined in \cite{cucuringu2020hermitian}:
\begin{equation}
A_{uv} = \begin{cases}
    w_{uv}\exp\left(2 \pi \mathrm{i}/\tilde{k}\right) & \text{ if } u \rightarrow v,  \\
    w_{uv}\exp\left(-2 \pi \mathrm{i}/\tilde{k}\right) & \text{ if } u \leftarrow v, \label{eq:adjdig}\\
    0 & \text{ otherwise}
\end{cases}
\end{equation}
where $\mathrm{i}$ is the imaginary unit and the value \(\Tilde{k}\) must be prescribed. Throughout our paper we will assume for simplicity that, for a directed graph, $(u,v) \in E \implies (v,u) \not\in E$.
The following definitions all apply to both undirected and directed graphs using each respective adjacency matrix.

The degree matrix \(D \in \mathbb{R}^{N \times N}\) is a diagonal matrix where each diagonal entry equals the degree of a vertex: $D_{uu} = d(u)$.
The Laplacian matrix \(L \in \mathbb{R}^{N \times N}\) is defined as $L = D - A$, while the normalized Laplacian matrix \(\mathcal{L} \in \mathbb{R}^{N \times N}\) is defined as
$
\mathcal{L} = D^{-1/2} L D^{-1/2} = I - D^{-1/2} A D^{-1/2}$.%,
%where \(I\) is the identity matrix.

We will use \(M \in \mathbb{C}^{N \times N}\) to denote any 
 Hermitian positive semidefinite matrix representation of a graph. We will denote its eigenvectors by \(f_1, \hdots, f_N \in \mathbb{C}^N\) with corresponding eigenvalues \( \lambda_1 \leq \hdots \leq \lambda_N \). We will write the eigendecomposition of $M$ as $M = F \Delta F^*$ with $F=(f_1 \, f_2 \, \cdots \, f_N)$ and $\Delta = \text{diag}(\lambda_1, \hdots, \lambda_N)$.

% \begin{align*} 
% M = F \Delta F^*, \qquad
% F = \begin{pmatrix}
%     | &  & | \\
%     f_1 & \hdots & f_N \\
%     | &  & |
% \end{pmatrix}, \qquad
% \Delta = \text{diag}(\lambda_1, \hdots, \lambda_N)
% \end{align*}

Given a basis of orthonormal vectors \(g_1, \hdots, g_N \in \mathbb{C}^N\), we denote their Rayleigh quotients by \(\gamma_i = g_i^* M g_i\), with \(\gamma_1 \leq \hdots \leq \gamma_N\). We assemble these vectors in a matrix \(G \in \mathbb{C}^{N \times N}\).

There are many variants of Spectral Clustering in the literature. Our results are quite general and will apply to most of these variants. For simplicity, we will consider the variant defined in Algorithm~\ref{alg:spectral}, which is the one considered in \cite{peng2015partitioning}. For undirected graphs, we typically choose $\tilde{k} = k$ and $M = \mathcal{L}$, which has real eigenvectors. The third step of Algorithm~\ref{alg:spectral} is optional and typically depends on the matrix representation used; for example, it is needed when $M$ is the normalised Laplacian $\mathcal{L}$, but not when $M$ is the combinatorial Laplacian $L$.
\vspace{-0.1cm}
\begin{algorithm}[h!]
\begin{algorithmic}[1]
%\Procedure{Spectral Clustering}{}
    \State \textbf{Input:} $\mathcal{G}=(V,E,w),M \in \mathbb{C}^{|V| \times |V|},k \ge 2,\tilde{k}\le k$
    \State Compute the matrix $F \in \mathbb{C}^{|V| \times \tilde{k}}$ whose columns are the orthonormal eigenvectors $f_1,\dots,f_{\tilde{k}}$ associated to $\lambda_1 \le \cdots \le \lambda_{\tilde{k}}$.
    \State $\tilde{F} \gets D^{1/2} F$
    \State Minimise the following $k$-means objective:
    \[
        \min_{c_1,\dots,c_k \in \mathbb{C}^{\tilde{k}}} \sum_{u \in V} d(u) \|\tilde{F}_{u,\colon} - c_i \|_2^2
    \]
    \State \textbf{Output:} partition $A_1,\dots,A_k$ corresponding to the solution of the above $k$-means problem.
\end{algorithmic}
\caption{Spectral Clustering}
\label{alg:spectral}
\end{algorithm}

\section{A generalised Structure Theorem} \label{sec:general}

Our first main result is an improved and generalised structure theorem. It shows that, for any matrix $M$, as long as $\lambda_{k+1} \gg \lambda_1$, any set of $k$ orthonormal vectors with Rayleigh quotient close to the smallest eigenvalue of $M$ must be close to linear combinations of the bottom $k$ eigenvectors of $M$. Furthermore, the bottom $k$ eigenvectors of $M$ will be close to linear combinations of these vectors. 
\pagebreak

\begin{thm} \label{thm:general}
    Let $M \in \mathbb{C}^{N \times N}$ be Hermitian and positive semidefinite with eigenvalues $0 \le \lambda_1 \le \cdots \le \lambda_N$ and corresponding orthonormal basis of eigenvectors $f_1,\dots,f_N$. Let $g_1,\dots,g_k \in \mathbb{C}^N$ be orthonormal and let $\gamma_i \coloneq g_i^* M g_i$ $(1 \le i \le k < N)$.  Then, if $\lambda_{k+1} > \lambda_1 $,there exist 
    \(\hat{g}_1, \hdots, \hat{g}_k \in \spn\{g_1, \hdots, g_k\}\),
    such that  \vspace{-0.4cm}
    \[ 
    \sum_{i=1}^k \|f_i - \hat{g}_i\|^2 \leq \frac{\sum_{i=1}^k \gamma_i - k \lambda_1}{\lambda_{k+1} - \lambda_1}.
    \] \vspace{-0.2cm}
\end{thm}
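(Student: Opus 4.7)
The plan is to take $\hat g_i$ to be the orthogonal projection of $f_i$ onto $V_G := \spn\{g_1,\dots,g_k\}$, and then compute $\sum_{i=1}^k \|f_i - \hat g_i\|^2$ directly by tracking how much mass each $f_i$ places on $V_G$. Writing $\beta_i := \|P_{V_G} f_i\|^2 = \sum_{j=1}^k |f_i^* g_j|^2$, the Pythagorean identity gives $\|f_i - \hat g_i\|^2 = 1 - \beta_i$, so the quantity to be bounded becomes $k - \sum_{i=1}^k \beta_i$.

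The key is then to couple the $\beta_i$'s to the data appearing in the bound via two global identities. First, since the $g_j$ are orthonormal,
\[
\sum_{i=1}^N \beta_i \;=\; \sum_{j=1}^k \|g_j\|^2 \;=\; k.
\]
Second, expanding each $g_j$ in the eigenbasis of $M$ as $g_j = \sum_i (f_i^* g_j)\, f_i$ and using $M f_i = \lambda_i f_i$ yields
\[
\sum_{j=1}^k \gamma_j \;=\; \sum_{j=1}^k g_j^* M g_j \;=\; \sum_{i=1}^N \lambda_i \beta_i.
\]

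The final step is to split the last sum at index $k$ and apply the crude bounds $\lambda_i \ge \lambda_1$ for $i \le k$ and $\lambda_i \ge \lambda_{k+1}$ for $i > k$, combined with $\sum_{i>k}\beta_i = k - \sum_{i \le k}\beta_i$ from the first identity:
\[
\sum_{j=1}^k \gamma_j \;\ge\; \lambda_1 \sum_{i=1}^k \beta_i \;+\; \lambda_{k+1}\Bigl(k - \sum_{i=1}^k \beta_i\Bigr).
\]
Rearranging this linear inequality for $k - \sum_{i=1}^k \beta_i$, using the hypothesis $\lambda_{k+1} > \lambda_1$, produces exactly the claimed bound.

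I do not foresee any genuine obstacle: the argument is essentially a one-paragraph Rayleigh-quotient manipulation once the right choice of $\hat g_i$ is in hand. The only subtlety worth flagging is that the theorem does \emph{not} require the $\hat g_i$ to be orthonormal — a demand that would force a Procrustes-style alignment and cost a factor depending on $k$ — but only that they lie in $\spn\{g_1,\dots,g_k\}$. This is precisely what lets the (generally non-unit-norm, non-orthogonal) raw projections $P_{V_G} f_i$ serve as the witnesses, and is the reason the bound depends on the eigenvalue gap $\lambda_{k+1}-\lambda_1$ without the extra multiplicative losses that appear in, e.g., Davis--Kahan for subspace bases.
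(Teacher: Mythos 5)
Your proof is correct and is essentially the same argument as the paper's: both expand the $g_j$'s in the eigenbasis of $M$, use the Rayleigh quotient identity $\gamma_j = \sum_i \lambda_i |f_i^* g_j|^2$ together with the eigenvalue gap at index $k$, and take $\hat g_i$ to be the projection of $f_i$ onto $\spn\{g_1,\dots,g_k\}$. The only difference is presentational: the paper routes the computation through an auxiliary unitary change-of-basis matrix $Q$ (extending the $g_j$'s to a full orthonormal basis) and bounds each $g_i$'s Rayleigh quotient separately before summing, whereas you work directly with the projection masses $\beta_i$ and the global identities $\sum_i \beta_i = k$ and $\sum_j \gamma_j = \sum_i \lambda_i \beta_i$, which is a slightly leaner packaging of the same computation.
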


We observe we can simply recover the structure theorem - Theorem 1 of \citet{macgregor2022tighter} - by choosing  $M = \mathcal{L}$, the normalised Laplacian of an undirected graph $\mathcal{G} = (V,E,w)$, and letting $g_1,\dots,g_k$ be the (normalised) indicator vectors of the clusters achieving the minimum $\rho(k)$.

\begin{corollary} \label{cor:structure}
    Let \(\mathcal{G}\) be undirected and connected with normalized Laplacian \(\mathcal{L}\).
    Let \(\{S_i\}_{i=1}^k\) be any optimal k-way partition that achieves \(\rho(k)\). For any $1 \le i \le k$, define $\chi_i \in \mathbb{R}^N$ as $\chi_i(u) = 1$ if $u \in S_i$ and $\chi_i(u) = 0$ otherwise. Let \(g_i = \frac{D^{1/2} \chi_i}{\|D^{1/2} \chi_i\|}\). Then, There exist $\hat{g}_{1}, \ldots, \hat{g}_{k} \in \spn\{g_1, \ldots, g_k\}$, such that \vspace{-0.2cm}
    \[
    \sum_{i=1}^{k} \| f_i - \hat{g}_{i} \|^2 \leq \frac{k \rho(k)}{\lambda_{k+1}}.
    \]
\end{corollary}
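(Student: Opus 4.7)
The plan is to apply Theorem~\ref{thm:general} directly, taking $M = \mathcal{L}$ and using the $g_i$ specified in the corollary; everything then reduces to computing the Rayleigh quotients $\gamma_i = g_i^* \mathcal{L} g_i$ and bounding $\lambda_1$.

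First I would verify the hypotheses of Theorem~\ref{thm:general}. Since $S_1,\dots,S_k$ are disjoint, the vectors $D^{1/2}\chi_i$ have disjoint supports and so the $g_i$ are pairwise orthogonal; the explicit normalisation makes them orthonormal. Because $\mathcal{G}$ is connected, $\mathcal{L}$ has a simple zero eigenvalue, so $\lambda_1 = 0$ and $\lambda_{k+1} > 0 = \lambda_1$ for any $k < N$, which is the condition required to invoke the theorem.

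The main computation is the Rayleigh quotient. Using $\mathcal{L} = D^{-1/2} L D^{-1/2}$ and $\|D^{1/2}\chi_i\|^2 = \chi_i^\top D \chi_i = \vol(S_i)$, I would observe that
\[
\gamma_i = g_i^* \mathcal{L} g_i = \frac{\chi_i^\top D^{1/2} \mathcal{L} D^{1/2} \chi_i}{\vol(S_i)} = \frac{\chi_i^\top L \chi_i}{\vol(S_i)}.
\]
A short calculation with $L = D - A$ (expanding $\chi_i^\top L \chi_i$ and cancelling the intra-cluster terms against $\vol(S_i)$) shows that $\chi_i^\top L \chi_i = w(S_i, V \setminus S_i)$, so $\gamma_i = \Phi(S_i)$. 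This is the step where conventions on counting undirected edges must be handled carefully, but it is essentially routine.

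Once $\gamma_i = \Phi(S_i)$ is established, the corollary follows immediately: since $\{S_i\}$ attains $\rho(k)$, we have $\gamma_i \le \rho(k)$ for every $i$, hence
\[
\sum_{i=1}^k \gamma_i - k\lambda_1 \;=\; \sum_{i=1}^k \Phi(S_i) \;\le\; k\,\rho(k).
\]
Plugging this and $\lambda_1 = 0$ into Theorem~\ref{thm:general} yields the desired bound $\sum_i \|f_i - \hat{g}_i\|^2 \le k\rho(k)/\lambda_{k+1}$. I do not anticipate a substantive obstacle; the only delicate part is ensuring the bookkeeping between $L$, $\mathcal{L}$ and the $D^{1/2}$ normalisation is consistent, after which the result is a one-line consequence of the general structure theorem.
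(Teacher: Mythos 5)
Your proposal is correct and follows the same route as the paper's own proof: apply Theorem~\ref{thm:general} with $M = \mathcal{L}$, compute $\gamma_i = \chi_i^\top L \chi_i / \vol(S_i) = \Phi(S_i) \le \rho(k)$, and use $\lambda_1 = 0$ for a connected graph. Your additional verification of the orthonormality of the $g_i$ and of the hypothesis $\lambda_{k+1} > \lambda_1$ is a sensible extra check that the paper leaves implicit.
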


Informally, Theorem~\ref{thm:general} states that if we choose a matrix representation $M$ of a graph and $g_1,\dots,g_k$ indicator vectors of some clusters, then the bottom $k$ eigenvectors of $M$ must be close to linear combinations of these indicator vectors as long as the Rayleigh quotients of the indicator vectors are significantly smaller than $\lambda_{k+1}$. This is crucial to show that Spectral Clustering works well. This is crucial to show that Spectral Clustering works well, as evidenced by the next lemma, which is adapted from \cite{cucuringu2020hermitian}.

\begin{lem}
\label{lem:kmeans}
Assume the partition $A_1,\dots,A_k$ output by Algorithm~\ref{alg:spectral} is computed by a $(1+\alpha)$-approximation algorithm for $k$-means, with $\tilde{k} = k$. Let $\{S_1, \hdots, S_k\}$ be any $k$-way partition of $V$ achieving $\rho(k)$. Let $G \in \mathbb{C}^{|V| \times \tilde{k}}$ be such that $(D^{-1/2}G)_{u,\colon} = (D^{-1/2}G)_{v,\colon}$ if $u,v\in S_i$ for some $i$. Let $F,\tilde{F}$ be defined as in Algorithm~\ref{alg:spectral}.
For any $i=1,\dots,k$, let $\mu_i = \vol(S_i)^{-1} \cdot \sum_{u \in S_i} d(u) \tilde{F}_{u,\colon}$. Define $\mathcal{D} \coloneq \min_{i,j} \|\mu_i - \mu_j\|$ and $U \coloneq \sum_{u \in V}\|F_{u,\colon}- G_{u,\colon}\|^2$. Assume $U \le (1/5) \mathcal{D}^{-1} (2+\alpha)^{-1} \cdot \min_{i=1,\dots,k} \vol(S_i)$.
Then, the volume of the symmetric difference between $\{S_1,\dots,S_k\}$ and $\{A_1,\dots,A_k\}$ (up to a permutation of the indices) is at most $\mathcal{O}(\frac{(1+\alpha) U}{\mathcal{D}^2})$.
\end{lem}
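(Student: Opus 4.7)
The plan is to follow the standard three-step template for analysing the $k$-means step of spectral clustering, adapted to the weighted embedding $\tilde F$.

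First, I would upper bound the $k$-means cost of the ``target'' clustering, namely the partition $\{S_i\}$ with centres $\{\mu_i\}$:
\[
\mathrm{OPT}_S := \sum_{i=1}^{k} \sum_{u \in S_i} d(u) \|\tilde F_{u,:} - \mu_i\|^2.
\]
Since each $\mu_i$ is the weighted centroid of $\tilde F$ restricted to $S_i$, it minimises the per-cluster $k$-means cost, so any other choice of centres provides an upper bound. The hypothesis that $D^{-1/2} G$ is piecewise constant on $\{S_i\}$ is used to construct a convenient set of alternative centres (derived from the rows of $G$) whose $k$-means cost evaluated on $\tilde F$ decomposes, vertex by vertex, into a function of $\|F_{u,:}-G_{u,:}\|^2$; summing then yields $\mathrm{OPT}_S = \mathcal{O}(U)$.

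Second, the $(1+\alpha)$-approximation guarantee gives that the cost of the output $\{A_j\}$ with its centres $\{c^A_j\}$ is at most $(1+\alpha)\mathrm{OPT}_S = \mathcal{O}((1+\alpha)U)$. I would then fix a bijection $\sigma:[k]\to[k]$ between target and output clusters by matching each $\mu_i$ with the closest output centre $c^A_{\sigma(i)}$. The hypothesis $U \le (1/5)\mathcal{D}^{-1}(2+\alpha)^{-1}\min_i\vol(S_i)$ is precisely what is needed to prove (by a volume/pigeonhole argument using the cost bound) that this matching is a valid bijection and that each $c^A_{\sigma(i)}$ lies substantially closer to $\mu_i$ than to any other target centre; without this assumption, distinct output centres could collapse onto the same $\mu_i$. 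For any vertex $u \in S_i$ assigned by the algorithm to a different cluster $A_{\sigma(j)}$ with $j\ne i$, a triangle inequality among $\tilde F_{u,:}$, $c^A_{\sigma(j)}$, and the target centres $\mu_i,\mu_j$ shows that $u$'s cost contribution is $\Omega(d(u)\mathcal{D}^2)$. Summing over all misclassified vertices yields the claimed symmetric-difference volume bound of $\mathcal{O}((1+\alpha)U/\mathcal{D}^2)$.

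The main obstacle is Step 1: bounding the weighted $k$-means cost $\mathrm{OPT}_S$, which is expressed in the scaled embedding $\tilde F_{u,:} = \sqrt{d(u)}F_{u,:}$, by the unweighted quantity $U = \sum_u \|F_{u,:}-G_{u,:}\|^2$. The piecewise-constant assumption on $D^{-1/2}G$ is tailored exactly to this step: it guarantees the existence of single centres per cluster that simultaneously approximate all rows $\tilde F_{u,:}$ within each $S_i$, so the single-centre cost is controlled by the vertex-wise discrepancies recorded in $U$ rather than by any stronger pointwise structural assumption on $\tilde F$.
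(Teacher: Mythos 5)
Your proposal is correct and follows essentially the same argument as the paper's proof: upper-bound the target cost $\sum_{j}\sum_{u\in S_j}d(u)\|\tilde F_{u,:}-\mu_j\|^2$ by $U$ via the alternative centres supplied by the piecewise-constant $D^{-1/2}G$, invoke the $(1+\alpha)$-approximation guarantee, use the assumption on $U$ to show the output centres are $\Omega(\mathcal{D})$-separated so that each $\mu_i$ has a unique nearby $c_j$, and finally lower-bound the cost contribution of each misclassified vertex by $\Omega(d(u)\mathcal{D}^2)$. The paper defines ``misclassified'' directly as those $u\in S_j$ with $\|c(u)-\mu_j\|\ge\mathcal{D}/2$ rather than first fixing an explicit bijection $\sigma$, but that is only a cosmetic difference in bookkeeping.
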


Notice that, by choosing $ G_{:,i} = \hat{g}_i$ for $i=1,\dots,\tilde{k}$, we have $U = \sum_{i=1}^{\tilde{k}} \|f_i - \hat{g}_i\|^2$. Thus $\mathcal{D}$ is instead dependent on the choice of the indicator vectors. In the case of undirected graph clustering with $M = \mathcal{L}$ and the traditional choice of indicator vectors, as long as $U \ll 1$ (e.g., by Corollary \ref{cor:structure}, whenever $k \rho(k) \ll \lambda_{k+1}$), we have that $\mathcal{D} = \Omega\left(\min_{i=1,\dots,k} \vol(S_i)^{-1}\right)$\citep{macgregor2022tighter}. Assuming a constant-factor approximation algorithm for $k$-means is used, Lemma \ref{lem:kmeans} guarantees that the symmetric difference between the partition output by spectral clustering and the optimal partition is small.

Theorem~\ref{thm:general} together with Lemma~\ref{lem:kmeans} give us a computable way to certify the quality of the partition $A_1,\dots,A_k$ output by spectral clustering compared to the optimal partition $S_1,\dots,S_k$ %(i.e., the one achieving $\rho(k)$)
. More precisely, let $g_1,\dots,g_k$ be a set of orthonormal indicator vectors for $A_1,\dots,A_k$ with Rayleigh quotients $\gamma_i \coloneq g_i^* M g_i$. Compute the value
$
    \frac{1}{k} \frac{\sum_{i=1}^k \gamma_i - k \lambda_1}{\lambda_{k+1} - \lambda_1}.
$
Theorem~\ref{thm:general} and Lemma~\ref{lem:kmeans} ensure that, if this value is small, then $A_1,\dots,A_k$ are close to $S_1,\dots,S_k$ (up to permutation of the indices). 

Besides recovering the original structure theorem, Theorem~\ref{thm:general} is not confined to the normalised Laplacian of undirected graphs. In Section~\ref{sec:digraphs} we will show how to apply Theorem~\ref{thm:general} to Hermitian representations of digraphs. In particular, we do not necessarily need each $g_i$ to be a classical indicator vector: in certain cases (as in digraph clustering) it might be beneficial to choose a different set of ``indicator vectors''. Moreover, in digraphs, $\lambda_1$ might not necessarily be zero: subtracting $\lambda_1$ from both numerator and denominator on the RHS of our bounds can improve them substantially.

\begin{figure}[ht]
    \begin{subfigure}{0.43\textwidth}
    \setlength{\abovecaptionskip}{8pt}   % space above caption
    \setlength{\belowcaptionskip}{0pt}
        \includegraphics[width=\textwidth]{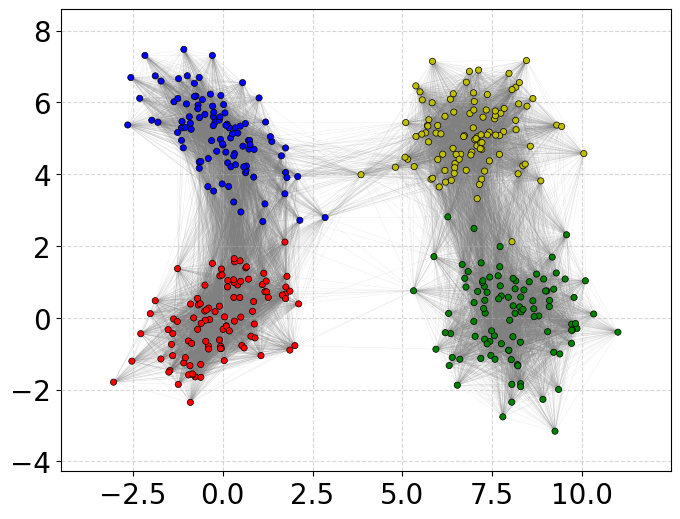}
        \caption{Geometric graph}
        \label{fig:4gaussianclustersgraph}
    \end{subfigure}
    \hfill
    \begin{subfigure}{0.48\textwidth}
    \setlength{\abovecaptionskip}{8pt}   % space above caption
    \setlength{\belowcaptionskip}{0pt}
        \includegraphics[width=\textwidth]{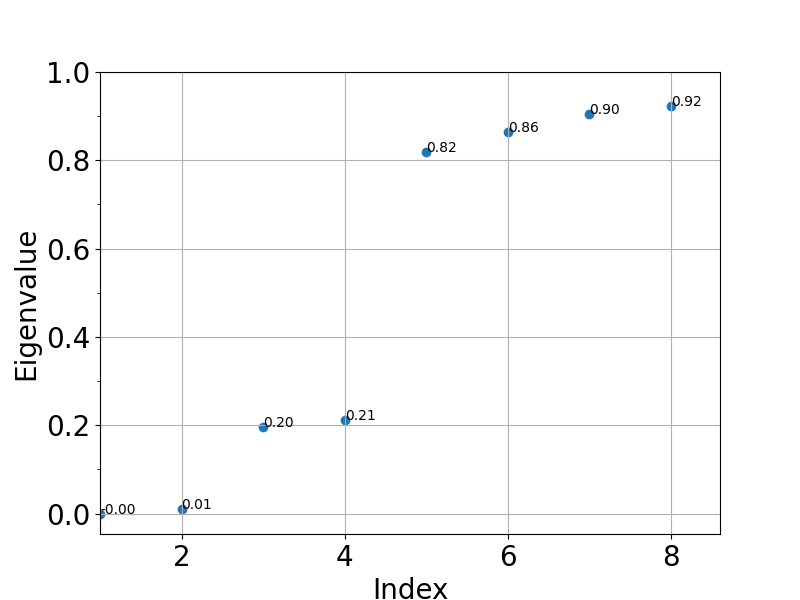}
        \caption{Smallest eigenvalues of the normalised Laplacian}
        \label{fig:4gaussianclusterseigenvalues}
    \end{subfigure}

    \caption{Four clusters generated by sampling points from a mixture of 4 Gaussians and the corresponding geometric graph (\ref{fig:4gaussianclustersgraph}). Notice how the smallest four eigenvalues come in pairs (\ref{fig:4gaussianclusterseigenvalues}).
    \label{fig:4GaussianClustersFigure}}
\end{figure} 
 
The main limitation of Theorem~\ref{thm:general} and Corollary~\ref{cor:structure} is that they both rely on the gap \(\gamma_i - \lambda_1\) being significantly smaller than \(\lambda_{k+1} - \lambda_1\). This, in practice, is not always satisfied. For example, when using \(M = \mathcal{L}\), we often see multiple gaps in the eigenvalues (which might result in some $\gamma_i$ being considerably larger than $\lambda_1=0$), with Spectral Clustering still working effectively (see, e.g., Figure~\ref{fig:4GaussianClustersFigure}).
To overcome this obstacle, we can recursively apply the following result to 
show that, as long as these gaps in the spectrum are large enough, Spectral Clustering can still be effective.

\begin{thm}[Recursive Structure Theorem] \label{thm:rec}
Let  \(q < k\) and $\hat{g}_1, \hdots, \hat{g}_q \in \spn\{{g}_1, \hdots, {g}_q\}$. % and $\hat{f}_1, \hdots, \hat{f}_q \in \spn\{{f}_1, \hdots, {f}_q\}$. 
Then, there exist 
\(\hat{g}_{q+1}, \hdots, \hat{g}_k \in \spn\{{g}_1, \hdots, {g}_k\}\)
and \(\hat{f}_{q+1}, \hdots, \hat{f}_k \in \spn\{{f}_{1}, \hdots, {f}_k\}\) such that \vspace{-0.2cm}
\[
\sum_{i=q+1}^{k} \|f_i - \hat{g}_i\|^2
 \leq \frac{\sum \limits_{i = q + 1}^{k} \left( \gamma_i  -  \lambda_{q + 1} \right)  + \lambda_{k + 1}  \sum \limits_{i=1}^{q} \|f_i - \hat{g}_i\|^2}{\lambda_{k + 1} - \lambda_{q + 1}} 
\]
The same bound also applies to \(\sum_{i=q+1}^{k} \|\hat{f}_i - g_i\|^2\)
\end{thm}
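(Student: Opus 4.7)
The plan is to pass to the eigenbasis of $M$ by writing the matrix $P = F^*G$ of inner products, so that $g_j = \sum_i P_{ij}f_i$ and $\gamma_j = \sum_i \lambda_i|P_{ij}|^2$. For each $j > q$ I will split the identity $\sum_i |P_{ij}|^2 = 1$ into three spectral pieces: $\alpha_j := \sum_{i\le q}|P_{ij}|^2$, a middle piece over $q<i\le k$, and $\delta_j := \sum_{i > k}|P_{ij}|^2$. The heart of the argument will be to first bound $\sum_j\alpha_j$ using the hypothesis on $\hat{g}_1,\dots,\hat{g}_q$, then use the Rayleigh-quotient identity to convert the spectral gap $\lambda_{k+1}-\lambda_{q+1}$ into a bound on $\sum_j\delta_j$, and finally to choose $\hat{g}_{q+1},\dots,\hat{g}_k$ so that the approximation error splits into exactly these two controlled quantities.

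The first part will be a short Bessel argument. Because $\hat{g}_i\in \spn\{g_1,\dots,g_q\}$ is orthogonal to every $g_j$ with $j>q$, we have $\langle f_i, g_j\rangle = \langle f_i-\hat{g}_i, g_j\rangle$ for such $j$, and Bessel's inequality on the orthonormal set $\{g_{q+1},\dots,g_k\}$ gives $\sum_{j=q+1}^{k}|\langle f_i,g_j\rangle|^2 \le \|f_i-\hat{g}_i\|^2$. Summing over $i\le q$ yields $\sum_{j=q+1}^{k}\alpha_j\le \varepsilon$, where $\varepsilon := \sum_{i=1}^{q}\|f_i-\hat{g}_i\|^2$. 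Next, I will decompose $\gamma_j$ for $j>q$ according to the three-region partition and lower-bound the eigenvalues in each region by $\lambda_1$, $\lambda_{q+1}$, and $\lambda_{k+1}$ respectively, obtaining the pointwise estimate $(\lambda_{k+1}-\lambda_{q+1})\delta_j \le \gamma_j - \lambda_{q+1} + (\lambda_{q+1}-\lambda_1)\alpha_j$. Summing over $j=q+1,\dots,k$ and substituting the first estimate gives $(\lambda_{k+1}-\lambda_{q+1})\sum_{j=q+1}^{k}\delta_j \le \sum_{j=q+1}^{k}(\gamma_j-\lambda_{q+1}) + (\lambda_{q+1}-\lambda_1)\varepsilon$.

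For the final step I will take $\hat{g}_i$ ($i>q$) to be the orthogonal projection of $f_i$ onto $G' := \spn\{g_{q+1},\dots,g_k\}$, which sits inside the required span and is orthogonal to $\spn\{g_1,\dots,g_q\}$. A direct Pythagorean computation gives $\sum_{i=q+1}^{k}\|f_i-\hat{g}_i\|^2 = \sum_{j=q+1}^{k}\alpha_j + \sum_{j=q+1}^{k}\delta_j \le \varepsilon + \sum_{j=q+1}^{k}\delta_j$; combining with the previous estimate, and using $\lambda_1 \ge 0$ to replace $\lambda_{k+1}-\lambda_1$ by $\lambda_{k+1}$, yields the claimed inequality. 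The companion bound for the $\hat{f}_i$'s follows from the symmetric choice $\hat{f}_i = \Pi_{\spn\{f_{q+1},\dots,f_k\}}g_i$, for which Pythagoras gives $\|g_i-\hat{f}_i\|^2 = \alpha_i+\delta_i$ at once. The main subtlety I expect is really the last step: the more natural choice of projecting $f_i$ onto the full span $\spn\{g_1,\dots,g_k\}$ would introduce the leakage of $g_1,\dots,g_q$ above $\lambda_k$, which the hypothesis cannot control, so the orthogonality of $G'$ with $\spn\{g_1,\dots,g_q\}$ is what forces the correct constant $\lambda_{k+1}$ (rather than something larger) in the numerator of the final bound.
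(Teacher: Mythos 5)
Your proof is correct and follows essentially the same route as the paper's: both decompose the spectrum into the three regions $[1,q]$, $(q,k]$, $(k,N]$, lower-bound $\gamma_j$ by a Rayleigh-quotient expansion with thresholds $\lambda_1,\lambda_{q+1},\lambda_{k+1}$, rearrange to control the tail mass $\delta_j$, and bound the low-frequency leakage $\sum_{j>q}\alpha_j$ by $\sum_{i\le q}\|f_i-\hat g_i\|^2$ before converting the block sums into the two approximation errors. Your Bessel step is a slightly cleaner way to handle the \emph{arbitrary} $\hat g_1,\dots,\hat g_q$ in the hypothesis (the paper effectively argues via the optimal projection, noting the RHS only increases for other choices), and projecting onto $\spn\{g_{q+1},\dots,g_k\}$ rather than the full $\spn\{g_1,\dots,g_k\}$ makes the Pythagorean bookkeeping exact; but these are stylistic refinements of the same argument, not a different one.
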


The idea behind Theorem~\ref{thm:rec} is that, if the indicator vectors can be expressed \emph{mostly} by the eigenvectors \(f_{q+1}, \hdots, f_k\), then by the orthonormality of both sets of vectors, the indicator vectors of \(g_1, \hdots, g_q\) can be expressed \emph{mostly} by \(f_1, \hdots, f_q\). This results in the error term \(\lambda_{k+1} \sum_{i=1}^q \|f_i - \hat{g}_i\|^2\) being small and in exchange we have an improved ratio to that of Theorem 1 provided \(\gamma_i < \lambda_{k+1}\) for \(i = q+1, \hdots, k\).

Theorem \ref{thm:rec} is most appropriate when there are multiple gaps in the eigenvalues of $M$: we split the spectrum into groups of eigenvalues of similar magnitude, with a large gap between each group; we then recursively reapply Theorem~\ref{thm:rec} for each group $\lambda_{q+1},\dots, \lambda_k$ as long as  \(\gamma_i - \lambda_{q+1}  \ll \lambda_{k+1} - \lambda_{q+1}\).

\begin{sproof}
The optimal choice of \(\hat{g}_i\) for \(i = q, \hdots, k\) is the projection of the indicator vectors onto the first \(k\) eigenvectors. Extending the normalised indicator vectors to an orthonormal basis forming orthogonal matrix \(G\), we obtain a orthogonal projection matrix \(Q\) where \(F = GQ^{*}\). The quantity \(\sum_{i=1}^{k}\|f_i - \hat{g}_i\|^2\) can be rewritten using submatrices of \(F, G\) and \(Q\) as shown in the Figure~\ref{fig:matrix_approx}.
\newpage
\begin{figure}[h!]
    \centering
    \includesvg[width=\linewidth]{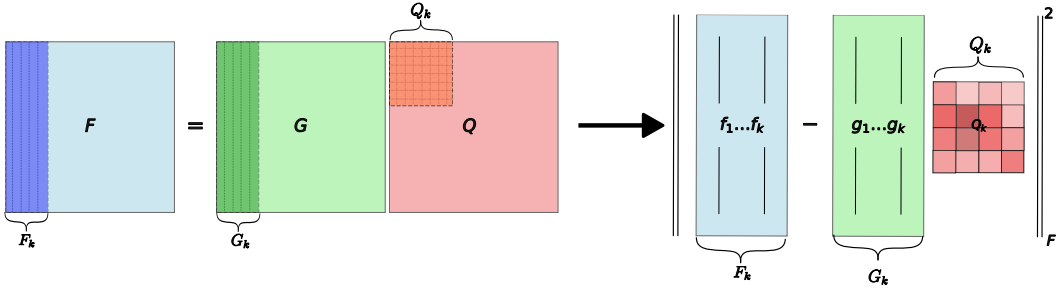}
    \caption{Illustration of how \(\sum_{i=1}^{k}\|f_i - \hat{g}_i\|^2\) is formed from orthogonal matrices.}
    \label{fig:matrix_approx}
\end{figure}
We rewrite \(\sum_{i=1}^{k}\|f_i - \hat{g}_i\|^2\) as \(\|F_k - G_kQ_k\|^2_F\) where \(F_k\) and \(G_k\) are the first \(k\) eigenvectors and indicator vectors respectively and \(Q_k\) is the \(k \times k\) top left block of the diagonal of \(Q\). The proof of this theorem utilises this fact and breaks it down further, considering the case when \(Q_k\) has blocks of concentration, see Figure~\ref{fig:concentration_blocks}:
\begin{figure}[h!]
    \centering
    \includesvg[width=0.2\linewidth]{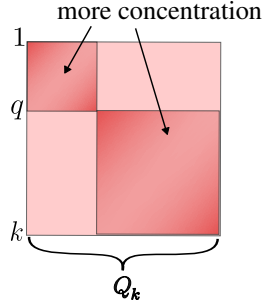}
    \caption{Illustration of how \(Q_k\) might have blocks with higher values.}
    \label{fig:concentration_blocks}
\end{figure}
\newline
The blocks correspond to groups of the indicator vectors that can express, with high accuracy, groups of eigenvectors.
The rows/columns of \(Q\) have unit length so \(\sum_{i=q+1}^{k} \|f_i - \hat{g}_i\|^2\) can be bounded above by \((k-q) - \sum_{i=q+1}^k \sum_{j=q+1}^k |Q_{ij}|^2\). The sum in this expression can then be bounded below as follows where we have again exploited the unit length rows of \(Q\):

\[
\gamma_i = g_i^*Mg_i = \sum_{j=1}^N \lambda_{j} |Q_{ij}|^2 \geq (\lambda_{q + 1} - \lambda_{k + 1}) \sum_{j = q + 1}^{k} |Q_{ij}|^2 + \lambda_{k+1} \left(1 - \sum_{j = 1}^q |Q_{ij}|^2 \right)
\]

Rearranging for \(\sum_{j = q + 1}^{k} |Q_{ij}|^2\)  and summing over \(i = q+1, \hdots, k\) provides us with with a lower bound on \(\sum_{i=q+1}^k \sum_{j=q+1}^k |Q_{ij}|^2\). Consequently, this gives an upper bound on \((k-q) - \sum_{i=q+1}^k \sum_{j=q+1}^k |Q_{ij}|^2\) as required. 
\end{sproof}

Typically, the choice for the indicator vectors \(g_1, \hdots, g_k\) is \(g_i =\frac{D^{1/2} \chi_i}{\|D^{1/2} \chi_i\|}\) where $\chi_i(u) = 1$ if $u \in S_i$ and $\chi_i(u) = 0$ otherwise.
The only real restrictions on our choice of indicator vectors, however, are that they are orthonormal and that \(\{D^{-1/2}g_i\}_{i=1}^k\) are constant on the clusters. Since the first eigenvector \(f_1\) of \(\mathcal{L}\) (or \(L\)) is the uninformative vector \(f_1 =\frac{D^{1/2} 1}{\|D^{1/2} 1\|}\) (or \(f_1 = \frac{1}{\|1\|}\)), we can choose \(g_1 = f_1\) and then re-orthogonalise \(g_2, \hdots, g_k\). This allows us to prove the following corollary.

\begin{corollary}\label{cor:RemoveFirstEvec}
    Let $\mathcal{G}$ be an undirected graph and $M = \mathcal{L}$. Let $g_1,\dots,g_k \in \mathbb{R}^N$ be orthonormal such that $g_1 = f_1$. Let \(\gamma_i = g_i^*Mg_i\)  \((i = 1, \hdots, k)\). Suppose $\lambda_{k+1} > \lambda_2$. Then, for \(i=1, \hdots, k\), there exists \(\hat{g}_i \in \spn\{g_1, \hdots, g_k\}\) such that \vspace{-0.2cm}
    \[
    \sum_{i=1}^k \|f_i - \hat{g}_i\|^2 \leq \frac{\sum_{i=2}^k (\gamma_i - \lambda_2)}{\lambda_{k+1} - \lambda_2}.
    \]
\end{corollary}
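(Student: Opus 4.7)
The plan is to derive this as a direct corollary of the Recursive Structure Theorem (Theorem~\ref{thm:rec}) by setting $q = 1$. The key observation is that since $\mathcal{G}$ is undirected and connected, the normalised Laplacian $\mathcal{L}$ has smallest eigenvalue $\lambda_1 = 0$ with the explicit eigenvector $f_1 = D^{1/2}\mathbf{1}/\|D^{1/2}\mathbf{1}\|$. This vector is fully characterised and, in the typical use case, lies in the span of the cluster indicator vectors $\{D^{1/2}\chi_i/\|D^{1/2}\chi_i\|\}_{i=1}^k$ (since $D^{1/2}\mathbf{1} = \sum_i D^{1/2}\chi_i$). Thus one can always Gram-Schmidt the original indicator basis so that the first vector equals $f_1$ without leaving the span, which is exactly the hypothesis $g_1 = f_1$ of the corollary.

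The main step is then mechanical. I would apply Theorem~\ref{thm:rec} with $q = 1$ and the choice $\hat{g}_1 = g_1 = f_1$. Since $f_1 = g_1$, we have $\|f_1 - \hat{g}_1\|^2 = 0$, which causes the penalty term $\lambda_{k+1}\sum_{i=1}^{q}\|f_i - \hat{g}_i\|^2$ in the numerator of Theorem~\ref{thm:rec} to vanish identically. What remains is precisely
\[
\sum_{i=2}^{k}\|f_i - \hat{g}_i\|^2 \;\le\; \frac{\sum_{i=2}^{k}(\gamma_i - \lambda_2)}{\lambda_{k+1} - \lambda_2},
\]
and adding the zero-valued $i=1$ term on the left extends the sum to $i=1,\dots,k$, yielding the stated bound. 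The condition $\lambda_{k+1} > \lambda_2$ is exactly what ensures the denominator is positive, matching the hypothesis of Theorem~\ref{thm:rec} with $q+1 = 2$.

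There is essentially no obstacle: the work has already been done in proving Theorem~\ref{thm:rec}, and the corollary is simply a specialisation that extracts its benefit in the canonical setting where one informative eigenvector is known a priori. The conceptual value worth highlighting is that, by anchoring $g_1$ to the trivial eigenvector $f_1$, the effective spectral gap controlling the bound shifts from $\lambda_{k+1}-\lambda_1$ (as in Theorem~\ref{thm:general}) to $\lambda_{k+1}-\lambda_2$, and the numerator similarly replaces $\lambda_1$ with $\lambda_2$. This is precisely the improvement needed to capture examples like the two-cliques-joined-by-a-matching graph discussed in the introduction, where $\gamma_2$ can be comparable to $\lambda_2$ (both relatively large) but still much smaller than $\lambda_{k+1}$.
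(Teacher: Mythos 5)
Your proposal is correct and follows essentially the same route as the paper: specialise Theorem~\ref{thm:rec} to $q=1$, choose $\hat{g}_1 = g_1 = f_1$ so the penalty term $\lambda_{k+1}\|f_1 - \hat{g}_1\|^2$ vanishes, and then absorb the zero $i=1$ term into the left-hand sum. The surrounding discussion of Gram--Schmidt and the significance of shifting the gap from $\lambda_{k+1}-\lambda_1$ to $\lambda_{k+1}-\lambda_2$ is accurate motivation but not part of the proof itself.
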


Corollary~\ref{cor:RemoveFirstEvec} essentially states that if the indicator vectors have Rayleigh quotient close to $\lambda_2 \ll \lambda_{k+1}$, then the bottom $k$ eigenvectors of $M$ must be very close to linear combinations of the indicator vectors. In contrast with the original structure theorem, this does not require $\lambda_2$ to be small. 

Corollary~\ref{cor:RemoveFirstEvec} can be used to analyse Spectral Clustering on stochastic block models. While such an analysis can be obtained using standard perturbation arguments, stochastic block models are an egregious instance where the structure theorem of Corollary~\ref{cor:structure} fails. We illustrate the performance of Corollary~\ref{cor:RemoveFirstEvec} on SBMs in the appendix.

\subsection{Experimental results}

To illustrate the impact of Theorem~\ref{thm:rec}, we consider synthetic graphs with a \emph{hierarchical} structure and show the improved performance the theorem provides. We then consider some real-world networks to show that this hierarchical structure is naturally occurring and can be exploited to get a better bound on the performance of Spectral Clustering. 

% \subsubsection{Synthetic networks} \label{subsubsec:StructureSynthExamples}
\paragraph{Geometric random graphs}
We sample \(100\) points each from four two-dimensional Gaussians centred, respectively, \((0,0),(0,5),(d,0),(d,5)\) where \(d\) is a parameter we vary. %The covariance of each distribution is fixed as the identity \(I\) and 
A graph is constructed by assigning an edge between any two points if the Euclidean distance between them is less than a threshold, which we choose in this case to be 4. In Figure~\ref{fig:combined_bounds_models}a we compare our bounds on the distance between indicator vectors of the clusters and the eigenvectors of the Laplacian with the results of \citet{macgregor2022tighter} and the actual distances. Results are averaged over 10 realisations.

\begin{figure}[ht]
    \centering
    \begin{minipage}[t]{0.45\textwidth}
        \centering
        \includegraphics[width=\linewidth]{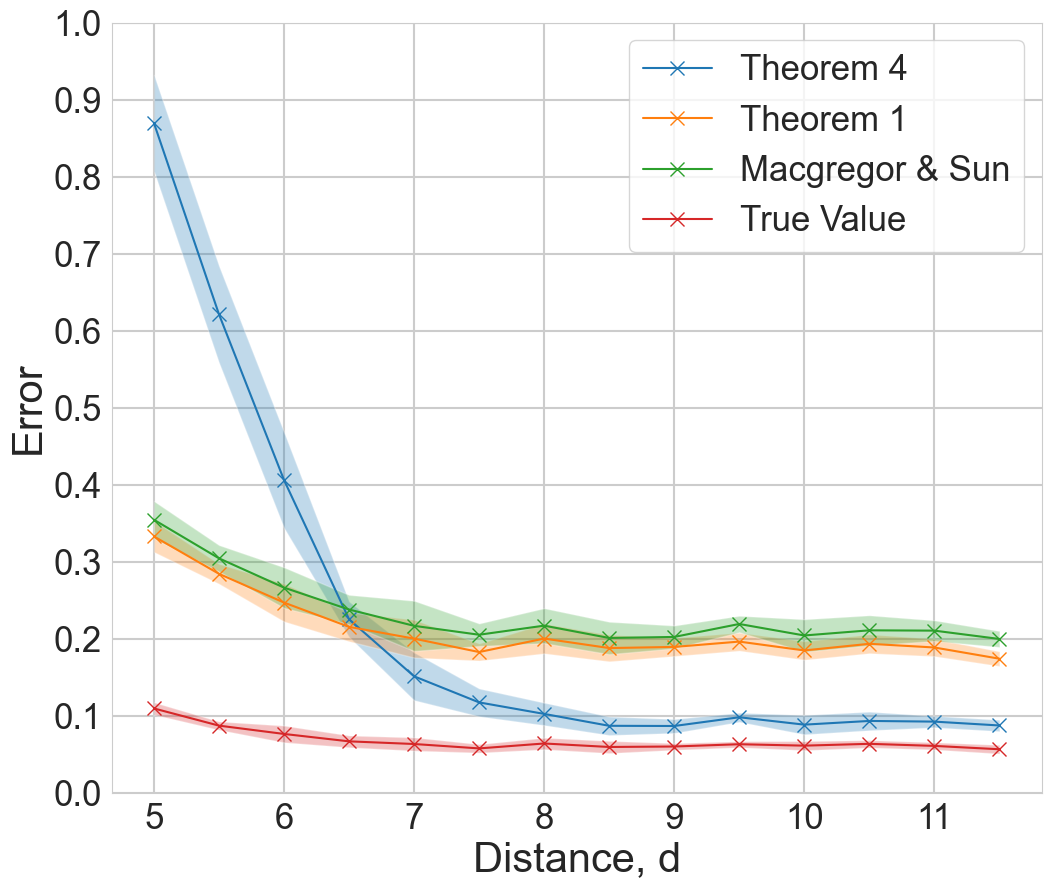}
        \caption*{(a)}
    \end{minipage}
    \hfill
    \begin{minipage}[t]{0.45\textwidth}
        \centering
        \includegraphics[width=\linewidth]{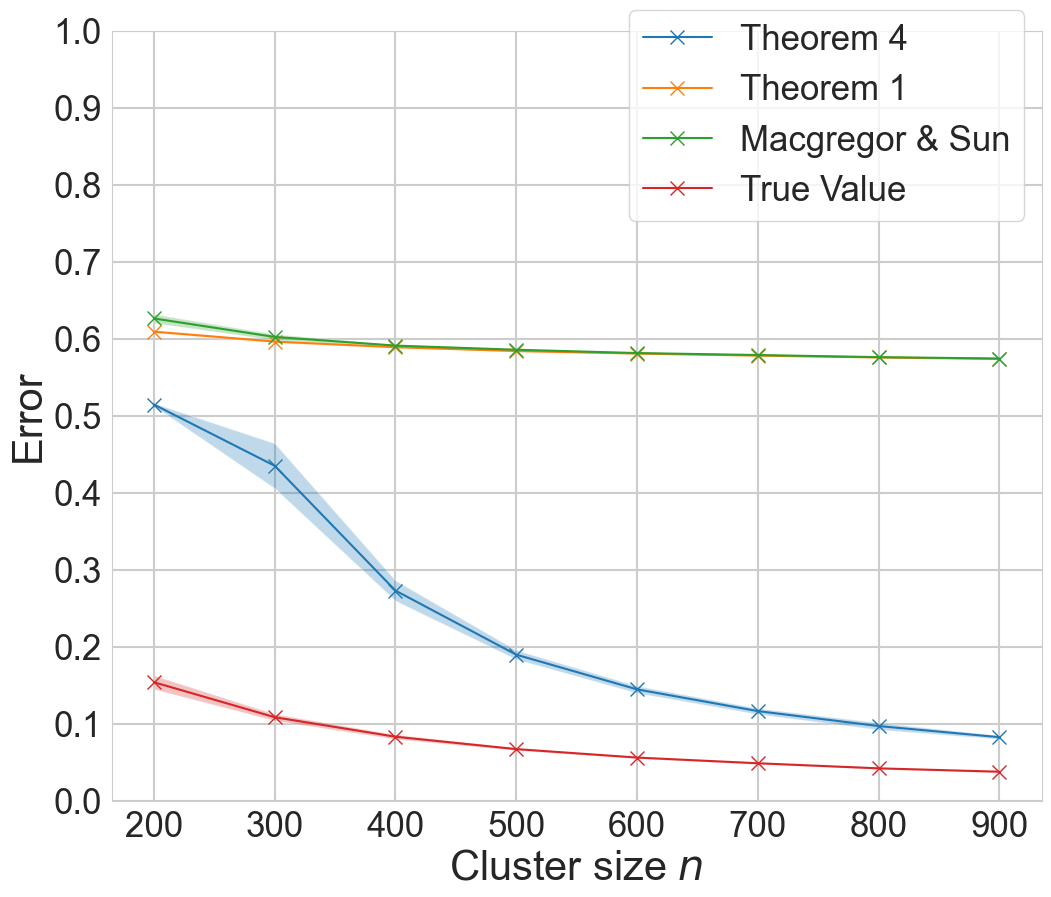}
        \caption*{(b)}
    \end{minipage}

    \caption{(a) Geometric random graph from Gaussian mixture model with varying distance between centres. (b) Stochastic block model with 4 blocks where two pairs have a higher affinity to each other. In both cases, the Error refers the bound given by Theorem \ref{thm:general} and \ref{thm:rec}, and by \citet{macgregor2022tighter} on $\frac{1}{k} \sum_{i=1}^{k} \|f_i - \hat{g}_i\|^2$, together to its actual value. Standard deviation is included as filled error bars.}
    \label{fig:combined_bounds_models}
\end{figure}

Notice that, as the distance parameter \(d\) increases, drawing two pairs of clusters further apart from each other, Theorem~\ref{thm:rec} drastically outperforms Corollary~\ref{cor:structure} and Theorem~\ref{thm:general}.

\paragraph{Stochastic block models}
We consider an SBM with 4 equal-sized clusters $S_1,\dots,S_4$. Let $P_{ij}$ be the probability that, for any $u \in S_i, v \in S_j$ there exists an edge between $u$ and $v$. We set $P_{ii} = 0.5 \,(i=1,\dots,4)$ and $P_{12} = P_{21} = P_{34} = P_{43} = 0.4$. All other values are set equal to \(0.1\). We effectively divide the vertices into two pairs of clusters which are more strongly connected to one another. In Figure~\ref{fig:combined_bounds_models}b, we compare our results with \cite{macgregor2022tighter} and the true distances between indicator vectors of the optimal clusters and the eigenvectors of the Laplacian. Notice that Theorem~\ref{thm:rec} greatly outperforms Corollary~\ref{cor:structure} and Theorem~\ref{thm:general}. Further experiments (such as when the parameters of the stochastic block model are close to the detectability threshold) are available in the Appendix.

\paragraph{Real-world networks}
In Table~\ref{tab:realworldgraphs}, we illustrate the improvement Theorem~\ref{thm:rec} provides over the results of \citet{macgregor2022tighter} on a variety of real-world networks. For MNIST \citep{MNIST}, Fashion MNIST \citep{xiao2017fashion} 
%(both accessed via OpenML \citep{OpenML2013}) 
and the Air Quality dataset \cite{AirQuality}, we constructed a graph from the data by computing a correlation matrix from its data points and assigning edges based on whether the correlation exceeded a pre-defined threshold. Twitch \citep{rozemberczki2021twitch}, LastFM \citep{LastFMfeather}, Athletes \citep{facebook-gemsec} and CA-CondMat \citep{CondMatterCollab} are all network datasets available at SNAP \cite{snapnets}. $N$ (resp. $M$) refers to the number of nodes (resp. edges). The number of clusters $k$ has been chosen so that a relatively large gap between $\lambda_{k}$ and $\lambda_{k+1}$ exists. We include  The improvements given by Theorem~\ref{thm:general} over \cite{macgregor2022tighter} are due to the fact that we sum over the Rayleigh quotients of the $k$ indicator vectors, rather than simply upper-bound this value by $k \rho(k)$ as done in \cite{macgregor2022tighter}. More interesting is, arguably, the improvement of Theorem~\ref{thm:rec}, which is due to many networks having clustered eigenvalues and/or large $\lambda_2$. Notice how Theorem~\ref{thm:rec} is able to certify that, in these examples, the output of spectral clustering is well-correlated with the partition of minimum conductance, something not achievable with previous structure theorems. For added context, we include \(\lambda_{k+1}\) and an approximation of \(\rho(k)\) which we denote as \(\Tilde{\rho}(k)\). The approximation is obtained using the clusters outputted by spectral clustering.
\begin{table}[ht]
    \centering
    \caption{Comparison on the bounds on $\frac{1}{k} \sum_{i=1}^{k} \|f_i - \hat{g}_i\|^2$ given by Theorem~\ref{thm:general} and \ref{thm:rec}, and by~\citet{macgregor2022tighter} on real-world networks. We take a sample for MNIST and Fashion MNIST. }\label{tab:realworldgraphs}
    \begin{tabular}{lrrrccccc}
        \toprule % from booktabs package
         \bfseries Dataset & \bfseries k & \bfseries N & \bfseries M & \bfseries ~\cite{macgregor2022tighter} & \bfseries Theorem~\ref{thm:general} & \bfseries Theorem~\ref{thm:rec} & $\Tilde{\rho}(k)$ & $\lambda_{k+1}$ \\
        \midrule % from booktabs package
        MNIST*        & 5   & 2348    & 140018   & 0.4395  & 0.1915  & 0.1744  & 0.021 & 0.048\\
        Fash. MNIST*  & 6   & 2872    & 1711206  & 0.5614  & 0.3761  & 0.3225  & 0.200 & 0.35\\
        Air Quality           & 3   & 4942    & 2784780  & 0.6234  & 0.2883  & 0.2350  & 0.332 & 0.533\\
        Twitch                & 2   & 168114  & 6797557  & 0.9567  & 0.4793  & 0.3937  & 0.128 & 0.133\\
        LastFM                & 2   & 7622    & 27800    & 0.8373  & 0.4221  & 0.3123  & 0.013 & 0.016\\
        Athletes              & 3   & 13866   & 86852    & 0.6951  & 0.4033  & 0.2427  & 0.053 & 0.037\\
        CA-CondMat            & 3   & 21363   & 91342    & 0.7806  & 0.4818  & 0.3370  & 0.012 & 0.016\\
        \bottomrule % from booktabs package
    \end{tabular}
\end{table}

%         MNIST (sample)        & 5   & 2348    & 140018  & 0.021 & 0.048\\
%         Fash. MNIST (sample)  & 6   & 2872    & 1711206 & 0.200 & 0.356\\
%         Air Quality           & 3   & 4942    & 2784780 & 0.332 & 0.533\\
%         Twitch                & 2   & 168114  & 6797557 & 0.162 & 0.133\\
%         LastFM                & 2   & 7622    & 27800   & 0.066 & 0.016\\
%         Athletes              & 3   & 13866   & 86852   & 0.053 & 0.037\\
%         CA-CondMat            & 3   & 21363   & 91342   & 0.014 & 0.016\\

\section{Digraph clustering} \label{sec:digraphs}
We now apply our techniques to Hermitian representations of digraphs, where the adjacency matrix is specified as in Equation~\ref{eq:adjdig}. \citet{cucuringu2020hermitian} have shown experimentally that Spectral Clustering on these matrix representations is able to recover clusters characterised by large imbalances in the direction of inter-cluster edges, in the sense that most edges between two clusters $S_i$ and $S_j$ follow the same direction. This is a \emph{higher-order} clustering problem, since clusters are defined according to their inter-cluster relations~\cite{martin_comnet}. This is in contrast with traditional undirected graph clustering, in which  a cluster is usually defined only according to its inner and outer density.

\citet{cucuringu2020hermitian} have proposed an analysis of Spectral Clustering on a directed analogue of the classical stochastic block model, while \citet{laenen2020higher} have attempted to provide an analysis for more general graphs. In Section~\ref{sec:laenen}, however, we will argue that Laenen and Sun's results fail to explain the practical performances of Spectral Clustering on digraphs.
Here we attempt to remedy this gap in the literature. We begin by defining a cost function for the task.

\begin{definition}
Let $\mathcal{G}=(V,E,w)$ be a digraph and let $k \ge 2$. Let $S_1,\dots,S_k$ be a $k$-way partition of $V$. We define the \emph{cyclic expansion} of $S_1,\dots,S_k$ as
\[
\Psi(S_1,\dots,S_k) = \frac{1}{\vol(V)} \sum_{(i,j) \notin C_k} w(S_i,S_j),
\]
where \(C_k = \{ (i,j) \ | \ j \equiv i+1 \mod k, \ 1 \leq i,j \leq k \}\), and the \emph{cyclic $k$-way expansion} of $\mathcal{G}$ as
\[
\Psi_k(\mathcal{G}) = \min_{\Part\{S_i\}_{i=1}^k} \Psi(S_1,\dots,S_k).
\]
\end{definition}

We want to find clusters $S_1,\dots,S_k$ so that most of the out-going edges from $S_i$ are connected to vertices in $S_{i+1 \text{ mod } k}$. %, for $i=1,\dots,k$. 
When $k=2$, our problem simply becomes finding \emph{disassortative} clusters. Indeed, the Hermitian Laplacian for $k=2$ is just the signless Laplacian, whose bottom eigenvectors are known to contain information about disassortative clusters \citep{maxcut,shipingliu}.
The next lemma clarifies the connection between this cost function and Hermitian Laplacians for digraphs.
\begin{lem} \label{lem:EquivalenceLambda}
    Let $\mathcal{G}$ be a connected digraph. Then, $\lambda_1(\mathcal{L})=0$ if and only if $\Psi_k(\mathcal{G}) = 0$.
\end{lem}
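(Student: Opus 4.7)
The plan is to reduce both directions of the equivalence to a single quadratic-form identity that simultaneously certifies positive semidefiniteness of $\mathcal{L}$ and characterises its kernel combinatorially. Writing $\omega = \exp(2\pi \mathrm{i}/k)$ and passing through the substitution $h = D^{-1/2} f$, the question ``$\lambda_1(\mathcal{L}) = 0$?'' becomes ``is there a nonzero $h$ with $h^* L h = 0$?''.

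First I would establish the identity
\[
h^* L h \;=\; \sum_{u \to v} w_{uv} \,\bigl| h(v) - \bar{\omega}\, h(u) \bigr|^2,
\]
by expanding $h^* L h = \sum_u d(u) |h(u)|^2 - h^* A h$ and pairing each directed edge $u \to v$ with its Hermitian partner: the contribution $w_{uv}\bigl(\omega\,\overline{h(u)}h(v) + \bar{\omega}\, h(u)\overline{h(v)}\bigr)$ combines with $w_{uv}(|h(u)|^2 + |h(v)|^2)$ to give the squared modulus. This identity immediately yields that $L$ (and hence $\mathcal{L}$) is positive semidefinite, and that $\lambda_1(\mathcal{L}) = 0$ if and only if there exists $h \ne 0$ satisfying $h(v) = \bar{\omega}\, h(u)$ on every directed edge $u \to v$.

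For the ``$\Leftarrow$'' direction, given an optimal partition $S_1,\dots,S_k$ attaining $\Psi_k(\mathcal{G}) = 0$, I would set $h(v) = \omega^{-j}$ for $v \in S_j$. Since every edge runs from some $S_j$ to $S_{j+1 \bmod k}$, the edge condition holds verbatim, so $h$ lies in $\ker L$ and $f = D^{1/2}h$ witnesses $\lambda_1(\mathcal{L}) = 0$. For ``$\Rightarrow$'', given $f \ne 0$ with $\mathcal{L} f = 0$, let $h = D^{-1/2} f$; the edge condition forces $|h(v)| = |h(u)|$, so connectedness of the underlying undirected graph makes $|h| \equiv c > 0$. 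Writing $h(v) = c\,\omega^{-\phi(v)}$ with $\phi(v) \in \mathbb{Z}/k\mathbb{Z}$, the edge condition becomes $\phi(v) \equiv \phi(u) + 1 \pmod{k}$ for every edge $u \to v$. The sets $S_j \coloneq \phi^{-1}(j)$ therefore form a $k$-way partition all of whose edges go from $S_j$ to $S_{j+1 \bmod k}$, giving $\Psi(S_1,\dots,S_k) = 0$; non-emptiness of each $S_j$ follows because the set of $\phi$-values reached by undirected walks from any fixed vertex is closed under $\pm 1 \bmod k$, hence equals all of $\mathbb{Z}/k\mathbb{Z}$.

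The main obstacle I anticipate is bookkeeping of the phase convention in the identity: it is easy to pick the wrong sign (e.g.\ $\omega$ versus $\bar{\omega}$, or labelling $h$ by $+\phi$ versus $-\phi$) and end up with a ``cycle condition'' that is inconsistent with the definition of $C_k$. Getting the conjugate on the correct side of the edge relation, so that the cyclic partition matches the orientation $S_i \to S_{i+1}$ appearing in the definition of $\Psi$, is the one delicate point; everything else is a routine expansion.
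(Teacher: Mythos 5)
Your proof takes essentially the same route as the paper's. Both reduce the statement to the quadratic-form identity $h^* L h = \sum_{u \to v} w_{uv}\,\lvert h(v) - \bar\omega\, h(u)\rvert^2$ (the paper derives this inside the proof of Lemma~\ref{lem:PsiBounds} and then invokes that lemma for the forward direction), and both use weak connectivity to pin down the kernel vector up to a global phase, with the combinatorial translation $S_j = \phi^{-1}(j)$ being identical. You verify the identity from scratch rather than by citation, and your phase convention $h(v) = \omega^{-j}$ for $v \in S_j$ is the one consistent with the definition of $C_k$, so the ``delicate point'' you flagged is handled correctly.

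There is one genuine error, though it sits in a step the statement does not need. The claim that the set of $\phi$-values reached by undirected walks is closed under both $+1$ and $-1$ modulo $k$ --- and hence equals all of $\mathbb{Z}/k\mathbb{Z}$ --- is false in general. Take the digraph consisting of a single directed edge $u \to v$ with $k = 3$: the kernel condition forces $\phi(v) = \phi(u) + 1$, so only two residues are attained and one $S_j$ is empty (note this graph still has $\Psi_3 = 0$, so the lemma itself is unaffected). Closure under $+1$ can fail at a sink and closure under $-1$ at a source, so the asserted closure property has no justification. Fortunately the paper's definition of a $k$-way partition permits empty parts, and the paper's own proof of this lemma makes no attempt to establish non-emptiness; once you delete that sentence, your argument is sound.
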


As observed in \cite{lisunzanetti}, the bottom eigenvector of a Hermitian Laplacian already contains information about all $k$ clusters. For this reason, given a $k$-way partition $\mathcal{S} = \{S_1,\dots,S_k\}$, we define $\chi_{\mathcal{S}} \in \mathbb{C}^N$ as follows. For any $j =1,\dots,k$, 
\[
\chi_{\mathcal{S}}(u) = \sqrt{\frac{d(u)}{\vol(V)}} \cdot \mathrm{e}^{\frac{2\pi \mathrm{i} j}{k}} \text{ for } u \in S_j,
\]
where $\mathrm{i}$ is the imaginary unit. Fundamentally, $\chi_{\mathcal{S}}$ maps each cluster to a power of the $k$-th root of unity.
The following simple but crucial lemma relates the Rayleigh quotient of $\chi_{\mathcal{S}}$ to $\Psi(S_1,\dots,S_k)$.

\begin{lem} \label{lem:PsiBounds}
Let $k \ge 2$. It holds that 
$16k^{-2} \Psi(S_1,\dots,S_k) \leq \chi_{\mathcal{S}}^* \mathcal{L}\chi_{\mathcal{S}} \le 4 \Psi(S_1,\dots,S_k).$
\end{lem}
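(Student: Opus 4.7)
The plan is to expand $\chi_{\mathcal{S}}^* \mathcal{L} \chi_{\mathcal{S}}$ edge-by-edge, exploiting that $\chi_{\mathcal{S}}$ is piecewise constant on the clusters with values given by $k$-th roots of unity. First I would note that $\|\chi_{\mathcal{S}}\|^2 = \sum_u d(u)/\vol(V) = 1$, so $\chi_{\mathcal{S}}^* \mathcal{L} \chi_{\mathcal{S}} = 1 - y^* A y$ with $y \coloneq D^{-1/2}\chi_{\mathcal{S}}$. Since $y$ is constant on each $S_j$ and $A$ is Hermitian, for every directed edge $u \to v$ with $u \in S_a$, $v \in S_b$, the pair of entries $A_{uv}$ and $A_{vu} = \overline{A_{uv}}$ contribute $\bar{y}_u A_{uv} y_v + \bar{y}_v A_{vu} y_u = (2 w_{uv}/\vol(V)) \cos \theta_{a,b}$, where $\theta_{a,b}$ is a multiple of $2\pi/k$ determined by $b - a$ and by the phase appearing in $A$. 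A direct computation shows that $\theta_{a,b} \in 2\pi\mathbb{Z}$ if and only if $(a,b) \in C_k$.

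Grouping edge contributions by cluster pair and using the identity $\sum_{a,b} w(S_a, S_b) = \sum_{e \in E} w_e = \vol(V)/2$, I would obtain the key expression
\[
\chi_{\mathcal{S}}^* \mathcal{L} \chi_{\mathcal{S}} \;=\; \frac{2}{\vol(V)} \sum_{a,b=1}^k w(S_a, S_b) \bigl[ 1 - \cos \theta_{a,b} \bigr],
\]
in which every pair $(a,b) \in C_k$ contributes zero, so only those pairs summed in $\vol(V) \cdot \Psi(S_1,\dots,S_k)$ survive. The upper bound is then immediate: applying $1 - \cos \le 2$ termwise yields $\chi_{\mathcal{S}}^* \mathcal{L} \chi_{\mathcal{S}} \le 4 \Psi(S_1,\dots,S_k)$.

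For the lower bound I would rewrite $1 - \cos \theta_{a,b} = 2 \sin^2(\theta_{a,b}/2)$ and invoke the Jordan inequality $\sin x \ge 2x/\pi$ on $[0,\pi/2]$, together with the symmetry of $\sin$ on $[0,\pi]$, to deduce $|\sin(\pi m / k)| \ge 2/k$ for every $m \in \{1,\dots,k-1\}$. Since each surviving phase satisfies $\theta_{a,b}/2 = \pi m / k$ with $m \not\equiv 0 \pmod{k}$, this yields $1 - \cos \theta_{a,b} \ge 8/k^2$ on every surviving term, which combined with the identity above produces $\chi_{\mathcal{S}}^* \mathcal{L} \chi_{\mathcal{S}} \ge 16 k^{-2} \Psi(S_1,\dots,S_k)$.

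The main obstacle I foresee is the careful sign-chasing in the edge-by-edge expansion: one must confirm that the phase in $A$ combines with the phases of $\chi_{\mathcal{S}}$ so that $\theta_{a,b} \in 2\pi\mathbb{Z}$ occurs exactly on $C_k$ rather than on some shifted set. Once that alignment is established, both bounds reduce to the elementary fact that $1 - \cos(2\pi m/k)$ lies between $8/k^2$ and $2$ for every $m \in \{1,\dots,k-1\}$.
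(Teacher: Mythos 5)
Your proof is correct and takes essentially the same route as the paper: expand $\chi_{\mathcal{S}}^*\mathcal{L}\chi_{\mathcal{S}}$ as $\frac{2}{\vol(V)}\sum_{a,b}w(S_a,S_b)\bigl(1-\cos\theta_{a,b}\bigr)$ with the $C_k$-terms vanishing, bound above via $1-\cos\le 2$, and bound below via $1-\cos(2\pi m/k)\ge 8/k^2$ for $m\in\{1,\dots,k-1\}$. Your invocation of Jordan's inequality is precisely the paper's explicit piecewise-linear minorant of $\sin^2$ in different clothing, and the phase-alignment concern you flag is the one point the paper also glosses over: under its stated conventions the cosine term equals $1$ when $j\equiv i-1\pmod k$ rather than $j\equiv i+1$, a harmless cycle reorientation.
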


We are now ready to state our structure theorem for digraphs.

\begin{thm}[Structure Theorem for Digraphs] \label{thm:digraph}
Let $\mathcal{G}$ be a digraph with Hermitian Laplacian $\mathcal{L}$. Assume $\lambda_2 > \lambda_1$. Given any $k$-way partition $\mathcal{S} = \{S_1,\dots,S_k\}$, there exists \(\alpha \in \mathbb{C}\) such that 
\begin{equation}
\label{eq:ours_ray}
\| f_1 - \alpha\chi_{\mathcal{S}}\|^2 \leq  \frac{\chi_{\mathcal{S}}^* \mathcal{L}\chi_{\mathcal{S}} - \lambda_1}{\lambda_2 - \lambda_1}.
\end{equation}
Furthermore, if $\mathcal{S}$ achieves $\Psi_k(\mathcal{G})$, then
\vspace{-0.1cm}
\begin{equation}
\label{eq:ours_psi}
\| f_1 - \alpha\chi_{\mathcal{S}}\|^2 \leq  \frac{4 \Psi_k(\mathcal{G}) - \lambda_1}{\lambda_2 - \lambda_1}.
\end{equation}
\end{thm}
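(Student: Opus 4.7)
The plan is to show that this is essentially a direct application of Theorem~\ref{thm:general} with $k=1$, combined with Lemma~\ref{lem:PsiBounds}. The first step is to verify that $\chi_{\mathcal{S}}$ is already a unit vector: by construction,
\[
\|\chi_{\mathcal{S}}\|^2 = \sum_{u \in V} |\chi_{\mathcal{S}}(u)|^2 = \sum_{u \in V} \frac{d(u)}{\vol(V)} \cdot |\mathrm{e}^{2\pi \mathrm{i} j(u)/k}|^2 = \frac{1}{\vol(V)}\sum_{u \in V} d(u) = 1,
\]
so it is a legitimate choice of orthonormal family of size one.

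Next, I would instantiate Theorem~\ref{thm:general} with $M = \mathcal{L}$, the single orthonormal vector $g_1 = \chi_{\mathcal{S}}$, and $k = 1$; the hypothesis $\lambda_{k+1} > \lambda_1$ becomes exactly the assumed condition $\lambda_2 > \lambda_1$. The theorem then produces some $\hat{g}_1 \in \spn\{\chi_{\mathcal{S}}\}$, i.e.\ of the form $\alpha \chi_{\mathcal{S}}$ for some scalar $\alpha \in \mathbb{C}$, satisfying
\[
\| f_1 - \alpha \chi_{\mathcal{S}} \|^2 \leq \frac{\gamma_1 - \lambda_1}{\lambda_2 - \lambda_1} = \frac{\chi_{\mathcal{S}}^* \mathcal{L} \chi_{\mathcal{S}} - \lambda_1}{\lambda_2 - \lambda_1},
\]
which is exactly inequality~\eqref{eq:ours_ray}.

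For the second inequality~\eqref{eq:ours_psi}, I would simply upper bound the Rayleigh quotient in the numerator using Lemma~\ref{lem:PsiBounds}, which gives $\chi_{\mathcal{S}}^* \mathcal{L} \chi_{\mathcal{S}} \leq 4 \Psi(S_1,\dots,S_k)$. When $\mathcal{S}$ achieves the minimum $\Psi_k(\mathcal{G})$, this becomes $\chi_{\mathcal{S}}^* \mathcal{L} \chi_{\mathcal{S}} \leq 4 \Psi_k(\mathcal{G})$, and substitution into the previous bound yields~\eqref{eq:ours_psi}.

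There is essentially no obstacle here beyond correctly packaging the prior results; the real work has already been done in proving Theorem~\ref{thm:general} and Lemma~\ref{lem:PsiBounds}. The only thing one must be slightly careful about is that Theorem~\ref{thm:general} is stated for Hermitian positive semidefinite $M$ with complex eigenvectors allowed, so the generalisation to a complex indicator vector $\chi_{\mathcal{S}}$ and a complex scalar $\alpha$ is covered without modification.
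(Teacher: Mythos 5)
Your proposal is correct and follows exactly the same route as the paper's own proof: instantiate Theorem~\ref{thm:general} with $k=1$, $M=\mathcal{L}$, and $g_1=\chi_{\mathcal{S}}$ to get \eqref{eq:ours_ray}, then plug in the upper bound of Lemma~\ref{lem:PsiBounds} to obtain \eqref{eq:ours_psi}. The only addition you make is the explicit check that $\chi_{\mathcal{S}}$ has unit norm, which the paper leaves implicit but is indeed needed for Theorem~\ref{thm:general} to apply.
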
 \vspace{-0.2cm}

We observe that 
$\displaystyle
\chi_{\mathcal{S}}^* \mathcal{L}\chi_{\mathcal{S}} = \frac{4}{\text{vol}(V)} \sum_{i=1}^k \sum_{j=1}^k w(S_i,S_j) \sin^2 \left( \frac{(i-(j+1)) \pi}{k} \right).$
Therefore, inequality (\ref{eq:ours_ray}) is generally stronger than (\ref{eq:ours_psi}) because it assigns a smaller penalty to edges from $S_i$ to $S_j$ when $i - (j+1)$ is small.

\subsection{Comparison with previous work}
\label{sec:laenen}
We now compare our structure theorem for digraphs to the one of Laenen and Sun \cite{laenen2020higher}. They consider the following  alternative to our cyclic expansion:
\[
\theta_k(\mathcal{G}) \triangleq \max_{\{S_i\}_{i=1}^k \text{ partition}}\sum_{i=1}^{k-1} \frac{w(S_i,S_{i+1})}{\text{vol}(S_i) + \text{vol}(S_{i+1})}.
\]
There are two differences compared with our definition of cyclic expansion. First, $\theta_k$ penalises edges from $S_k$ to $S_1$, i.e. it tries to fit a path rather than a directed cycle between clusters. While reasonable, we argue  this is not what Spectral Clustering on Hermitian Laplacians is actually doing. Secondly, and more importantly, the two cost functions differ in the normalisation based on the volume of the graph. We believe the normalisation chosen by \cite{laenen2020higher} is poorly suited to analyse Spectral Clustering. 

To motivate our  assertions, let us delve deeper into their results. We first remark  they choose a Hermitian Laplacian constructed with the $\lceil 2 \pi k \rceil$-th root of unity. They also construct a different ``indicator'' vector $\tilde{\chi}_{\mathcal{S}}$ for a $k$-way partition $\mathcal{S}$. This choice is not overly important, so we refer to their paper for further details. Their main result is as follows.

\begin{thm}[\cite{laenen2020higher}] \label{thm:laenenDigraphST}
    Let $f_1$ be the bottom eigenvector of the Hermitian Laplacian constructed with the \(\lceil 2 \pi k\rceil\)-th root of unity. Let $\eta_k(\mathcal{G}) \triangleq \frac{\lambda_2}{1 - (4/k)\theta_k(\mathcal{G})}$ and assume $\eta_k(\mathcal{G}) > 1$. Let $\mathcal{S}$ be a partition maximising $\theta_k(\mathcal{G)}$.
    There exists \(\beta \in \mathbb{C}\) such that 
$
\|f_1 - \beta \tilde{\chi}_{\mathcal{S}} \|^2 \leq (\eta_k(\mathcal{G}) - 1)^{-1}.
$
\end{thm}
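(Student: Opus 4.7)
The plan is to reduce Theorem~\ref{thm:digraph} to a direct application of the generalised structure theorem (Theorem~\ref{thm:general}) with the parameter $k$ of that theorem set to $1$, taking $M = \mathcal{L}$ and the single ``indicator vector'' $g_1 = \chi_{\mathcal{S}}$. The main conceptual step, and essentially the only one requiring thought, is to recognise that the cluster-encoding vector $\chi_{\mathcal{S}}$ is perfectly shaped to play the role of a unit-norm basis vector in Theorem~\ref{thm:general}, so no new linear-algebraic work is needed.

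First I would check that $\chi_{\mathcal{S}}$ is a unit vector: since $|\mathrm{e}^{2\pi\mathrm{i} j/k}| = 1$ for every $j$,
\[
\|\chi_{\mathcal{S}}\|^2 \;=\; \sum_{u \in V} \frac{d(u)}{\vol(V)} \;=\; 1,
\]
so $\{\chi_{\mathcal{S}}\}$ is trivially an orthonormal family of size one. With $\lambda_2 > \lambda_1$ by hypothesis, Theorem~\ref{thm:general} applies and yields a vector $\hat{g}_1 \in \spn\{\chi_{\mathcal{S}}\}$, which by construction is of the form $\hat{g}_1 = \alpha\, \chi_{\mathcal{S}}$ for some $\alpha \in \mathbb{C}$, satisfying
\[
\|f_1 - \alpha\, \chi_{\mathcal{S}}\|^2 \;\le\; \frac{\gamma_1 - \lambda_1}{\lambda_2 - \lambda_1} \;=\; \frac{\chi_{\mathcal{S}}^* \mathcal{L} \chi_{\mathcal{S}} - \lambda_1}{\lambda_2 - \lambda_1},
\]
which is precisely inequality~(\ref{eq:ours_ray}).

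For inequality~(\ref{eq:ours_psi}), I would invoke the upper bound of Lemma~\ref{lem:PsiBounds}, which gives $\chi_{\mathcal{S}}^* \mathcal{L} \chi_{\mathcal{S}} \le 4\Psi(S_1,\dots,S_k)$. When $\mathcal{S}$ achieves the minimum $\Psi_k(\mathcal{G})$, this becomes $\chi_{\mathcal{S}}^* \mathcal{L} \chi_{\mathcal{S}} \le 4\Psi_k(\mathcal{G})$, and substituting into~(\ref{eq:ours_ray}) produces~(\ref{eq:ours_psi}) immediately.

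Since both required ingredients (Theorem~\ref{thm:general} and Lemma~\ref{lem:PsiBounds}) are already established, there is no real obstacle; the hardest part is purely notational, namely verifying that the complex-valued $\chi_{\mathcal{S}}$ really does satisfy the hypotheses of the generalised structure theorem (Hermitian PSD matrix, unit norm, and the $\spn$-membership property), all of which hold essentially by definition. The theorem's content is precisely the observation that the generalised framework of Section~\ref{sec:general} subsumes the digraph setting once $\chi_{\mathcal{S}}$ is chosen as the correct replacement for the real-valued cluster indicator.
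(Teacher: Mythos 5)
Your proposal does not prove the statement it was asked to address. The statement is Theorem~\ref{thm:laenenDigraphST}, which is Laenen and Sun's result quoted from \cite{laenen2020higher} for the purpose of comparison; the paper in question does not (and does not need to) provide a proof of it. Your argument instead proves the authors' own Theorem~\ref{thm:digraph} --- your very first sentence announces that this is the target --- and indeed it matches the paper's proof of Theorem~\ref{thm:digraph} (apply Theorem~\ref{thm:general} with $k=1$, $M=\mathcal{L}$, $g_1 = \chi_{\mathcal{S}}$, then use the upper bound in Lemma~\ref{lem:PsiBounds}).

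These are substantively different theorems, not two phrasings of the same fact. Theorem~\ref{thm:laenenDigraphST} uses the Hermitian Laplacian built with the $\lceil 2\pi k\rceil$-th root of unity (not the $k$-th), a different ``indicator'' vector $\tilde{\chi}_{\mathcal{S}}$ whose construction the paper explicitly defers to \cite{laenen2020higher}, the cost function $\theta_k$ which rewards path-edges and is normalised by $\vol(S_i)+\vol(S_{i+1})$ (rather than $\Psi_k$, which penalises non-cyclic edges and is normalised by $\vol(V)$), and a bound of the form $(\eta_k(\mathcal{G})-1)^{-1}$ with $\eta_k(\mathcal{G}) = \lambda_2 / (1 - (4/k)\theta_k(\mathcal{G}))$ that does not reduce to $(\gamma_1 - \lambda_1)/(\lambda_2 - \lambda_1)$. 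None of your steps --- the unit-norm check for $\chi_{\mathcal{S}}$, the invocation of Theorem~\ref{thm:general}, the substitution via Lemma~\ref{lem:PsiBounds} --- touches $\tilde{\chi}_{\mathcal{S}}$, $\theta_k$, or $\eta_k$, so the argument cannot yield the stated inequality. To actually establish Theorem~\ref{thm:laenenDigraphST} one would have to follow the Peng--Sun--Zanetti style argument in Laenen and Sun's paper, which is a genuinely separate chain of reasoning that the present paper deliberately avoids reproducing because its whole point is that Theorem~\ref{thm:digraph} gives sharper bounds than Theorem~\ref{thm:laenenDigraphST}.
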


We now compare Theorem~\ref{thm:digraph} with Theorem~\ref{thm:laenenDigraphST} on the two very simple digraphs of Figure~\ref{fig:cyclepath},

\begin{figure}[h!]
    \centering

    % Left minipage: TikZ subfigures
    \begin{minipage}{0.36\textwidth}
        \centering
        \vspace{0.1cm}
        \begin{adjustbox}{valign=t}  % bottom-align this block
            
            \begin{subfigure}[b]{0.48\textwidth}
                %\centering 
                \hspace*{-0.5cm}{
                % Cyclic cluster TikZ code
                \begin{tikzpicture}[scale=0.3,
                    node/.style={circle, draw, fill=blue!30, minimum size=0.1cm, inner sep=0pt},
                    edge/.style={->, thin, line width=0.4pt, opacity=0.5}]
                    % cluster colors
                    \definecolor{cluster1color}{RGB}{240,128,128}
                    \definecolor{cluster2color}{RGB}{144,238,144}
                    \definecolor{cluster3color}{RGB}{135,206,250}
                    \definecolor{cluster4color}{RGB}{221,160,221}
                    \definecolor{cluster5color}{RGB}{255,182,193}
                    % coordinates
                    \foreach \i in {1,...,5} { \coordinate (C\i) at ({72*(\i-1)}:4cm); }
                    % nodes
                    \foreach \i/\color in {1/cluster1color,2/cluster2color,3/cluster3color,4/cluster4color,5/cluster5color} {
                        \node[minimum size=2cm] (cluster\i) at (C\i) {};
                        \foreach \j in {1,...,5} {
                            \node[node, fill=\color] (v\i\j) at ($(cluster\i) + ({360/5*(\j-1)}:0.8cm)$) {};
                        }
                    }
                    % edges cyclic
                    \foreach \i in {1,...,4} {
                        \foreach \j in {1,...,5} {
                            \foreach \k in {1,...,5} {
                                \draw[edge] (v\i\j) -- (v\the\numexpr\i+1\relax\k);
                            }
                        }
                    }
                    \foreach \j in {1,...,5} {
                        \foreach \k in {1,...,5} {
                            \draw[edge] (v5\j) -- (v1\k);
                        }
                    }
                \end{tikzpicture}}
                \caption{Cycle.}
                \label{fig:Perfect5Cycle}
            \end{subfigure}%
            \hfill
            \begin{subfigure}[b]{0.48\textwidth}
                \centering
                % Path cluster TikZ code
                \begin{tikzpicture}[scale=0.3,
                    node/.style={circle, draw, fill=blue!30, minimum size=0.1cm, inner sep=0pt},
                    edge/.style={->, thin, line width=0.4pt, opacity=0.5}]
                    \definecolor{cluster1color}{RGB}{240,128,128}
                    \definecolor{cluster2color}{RGB}{144,238,144}
                    \definecolor{cluster3color}{RGB}{135,206,250}
                    \definecolor{cluster4color}{RGB}{221,160,221}
                    \definecolor{cluster5color}{RGB}{255,182,193}
                    % coordinates
                    \coordinate (C1) at (0,0);  
                    \coordinate (C2) at (5,-2); 
                    \coordinate (C3) at (0,-4); 
                    \coordinate (C4) at (5,-6); 
                    \coordinate (C5) at (0,-8);
                    \foreach \i/\color in {1/cluster1color,2/cluster2color,3/cluster3color,4/cluster4color,5/cluster5color} {
                        \node[minimum size=2cm] (cluster\i) at (C\i) {};
                        \foreach \j in {1,...,5} {
                            \node[node, fill=\color] (v\i\j) at ($(cluster\i) + ({360/5*(\j-1)}:0.8cm)$) {};
                        }
                    }
                    \foreach \i in {1,...,4} {
                        \foreach \j in {1,...,5} {
                            \foreach \k in {1,...,5} {
                                \draw[edge] (v\i\j) -- (v\the\numexpr\i+1\relax\k);
                            }
                        }
                    }
                \end{tikzpicture}
                \caption{Path.}
                \label{fig:DirectedPath5}
            \end{subfigure}
            \end{adjustbox}
            \caption{Examples of directed cluster structures.}
        \label{fig:cyclepath}
        
    \end{minipage}%
    \hspace{0.3cm}%
    % Right minipage: PNG image
    \begin{minipage}{0.6\textwidth}
        \centering
         % bottom-align with left minipage
            \includegraphics[width=0.8\textwidth]{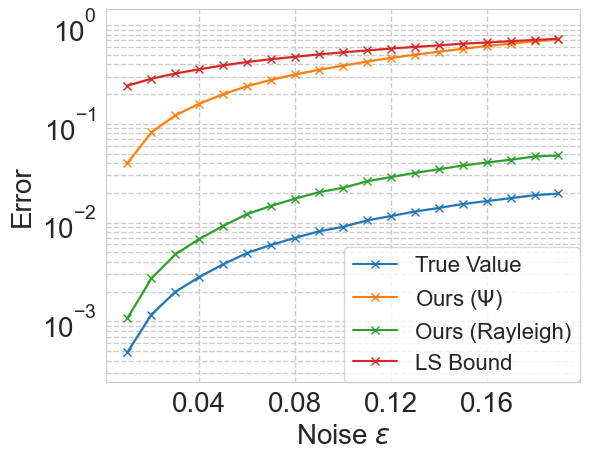}
        \caption{Comparison of the results given by Theorem~\ref{thm:digraph} (green for (\ref{eq:ours_ray}) and orange for (\ref{eq:ours_psi})) and by \cite{laenen2020higher} (red) for a cyclic DSBM at varying level of noise. The actual values are reported in blue. Averaged over 10 realisations.}% The standard deviation is included as filled error bars (although they are not visible as the standard deviation is very small).}
        \label{fig:5PathImage}
    \end{minipage}

\end{figure}

in which we have five clusters perfectly arranged on a directed cycle (Figure~\ref{fig:cyclepath}a)  and on a directed path (Figure~\ref{fig:cyclepath}b). Clearly, in both cases, $\Psi_k = 0$. Applying our structure theorem, this implies that, in both cases, the first eigenvector is exactly a multiple of the indicator vector $\chi_{\mathcal{S}}$. Therefore, the clusters are embedded in $k$ distinct and well-separated points, which are just the $k$ powers of the $k$-th root of unity rotated by some $\alpha \in \mathbb{C}$. Our structure theorem correctly predicts that spectral clustering will recover the clusters perfectly.

If we apply Theorem~\ref{thm:laenenDigraphST} instead, we obtain that, for the digraph of Figure~\ref{fig:Perfect5Cycle}, $\|f_1 - \beta \tilde{\chi}_{\mathcal{S}} \|^2 \le (\eta_k(\mathcal{G}) - 1)^{-1} \approx 0.642$ which is not informative at all. If we consider the digraph of Figure~\ref{fig:DirectedPath5}, we obtain a slightly better bound: $\|f_1 - \beta \tilde{\chi}_{\mathcal{S}} \|^2 \le (\eta_k(\mathcal{G}) - 1)^{-1} \approx 0.294$. This is still far from the true value, which is equal to zero.

\subsection{Experimental results}

\paragraph{Directed stochastic block models} %\label{subsec:DSBMs}
We now apply our results to the Directed Stochastic Block Model of \cite{cucuringu2020hermitian}.
Given parameters $k\ge 2,n \ge 1, P \in [0,1]^{k \times k}, F \in [0,1]^{k \times k}$, a directed stochastic block model \( \mathcal{G} \sim \text{DSBM}(k, n, P, F) \) is a random graph of $N=kn$ vertices constructed as follows: each vertex belongs to one of $k$ communities $S_1,\dots,S_k$ of $n$ vertices each. We place an edge independently at random between any two vertices $u \in S_i,v \in S_j$ with probability $P_{ij} = P_{ji}$. Furthermore, we orient the edge between $u$ and $v$ from $u$ to $v$ with probability $F_{ij}$, from $v$ to $u$ with probability $F_{ji} = 1 - F_{ij}$.

We consider a model $\mathcal{G} \sim \text{DSBM}(k, n, P, F)$ for $k=4$ and $n=100$ with $P,F$  specified as follows. \vspace{-0.2cm}

\small
\[F = \begin{pmatrix}
    &\cellcolor{yellow!30}.5 & \cellcolor{yellow!60}1 & \cellcolor{yellow!30}.5 & \cellcolor{yellow!30}.5 &\\
    &\cellcolor{yellow!0}0 & \cellcolor{yellow!30}.5 & \cellcolor{yellow!60}1 & \cellcolor{yellow!30}.5 &\\
    &\cellcolor{yellow!30}.5 & \cellcolor{yellow!0}0 & \cellcolor{yellow!30}.5 & \cellcolor{yellow!60}1  &\\
    &\cellcolor{yellow!30}.5 & \cellcolor{yellow!30}.5 & \cellcolor{yellow!0}0 & \cellcolor{yellow!30}.5 &\\
\end{pmatrix}, \
P = \begin{pmatrix}
    &\epsilon & \cellcolor{yellow!60}1 & \epsilon & \epsilon &\\
    &\cellcolor{yellow!60}1 & \epsilon & \cellcolor{yellow!60}1 & \epsilon &\\
    &\epsilon & \cellcolor{yellow!60}1 & \epsilon & \cellcolor{yellow!60}1 &\\
    &\epsilon & \epsilon & \cellcolor{yellow!60}1 & \epsilon & \\
\end{pmatrix}\]
\normalsize

$F$ represents a path structure, while this choice of $P$ means the path structure is very pronounced. \(\epsilon\) is a noise parameter: the smaller $\epsilon$ is, the closer the graph will be to having a perfect path cluster-structure.

In Figure~\ref{fig:5PathImage}, we compare the bounds of Theorem~\ref{thm:digraph} with the results of \cite{laenen2020higher}, and with the true distance between the bottom eigenvector of the Hermitian Laplacian and the indicator vector of the clusters.  Our results predict the true values exceptionally well and correctly imply Spectral Clustering will work almost perfectly for all noise levels considered. On the contrary, Laenen and Sun's result becomes non-informative even for small noise parameters. We provide a similar experiment for a DSBM with a cyclic cluster structure in the Appendix.

\begin{table}[h]
    \centering
    \captionof{table}{Comparison on the bounds on $\|f_1 -\alpha \chi_{\mathcal{S}}\|^2$ given by Theorem \ref{thm:digraph} (\ref{eq:ours_ray}) (Ours) and \citet{laenen2020higher} (LS) on real-world networks.}\label{tab:digraphs}
    \begin{tabular}{lcccccc}
        \toprule
        Network & $k$ & N & M & $\Psi$ & Ours & LS \\
        \midrule
        Yellowstone & 4 & 15 & 37 & 0.027 & 0.086 & 0.662 \\
        COVID-19    & 4 & 67 & 66 & 0.000 & 0.000 & N/D   \\
        St. Marks   & 5 & 49 & 226 & 0.154 & 0.324 & N/D   \\
        St. Martin  & 4 & 45 & 224 & 0.118 & 0.352 & N/D   \\
        Ythan       & 4 & 135 & 601 & 0.101 & 0.399 & N/D   \\
        \bottomrule
    \end{tabular}

\end{table}

\paragraph{Real-world directed networks} We apply our results to real-world directed networks and summarise our findings in Table~\ref{tab:digraphs}: for each network considered, we compare our bounds from Theorem~\ref{thm:digraph} with Laenen and Sun's Theorem~\ref{thm:laenenDigraphST}.

We first consider the directed graph analysed in \cite{laenen2020higher} made from the Data Science for COVID-19 Dataset \citep{kcdc2020}, where an edge from \(u\) to  \(v\) exists if \(u\) has infected \(v\). This graph has many disconnected components so we consider its largest weakly connected component. Spectral Clustering finds clusters fitting perfectly to a directed $4$-cycle; therefore, \(\Psi_4 = 0\) and  our bound in Theorem~\ref{thm:digraph} is $\|f_1 - \beta \chi_{\mathcal{S}} \|^2 = 0$. On the other hand, since $\eta_4 < 1$, Laenen and Sun's result is not applicable. Both this and the previous data set are characterised by unbalanced clusters, which results in a small $\theta_4$ and makes Laenen and Sun's results uninformative. 

%The Yellowstone food web \citep{yellowstone_foodweb} is a small digraph that adheres to a very clear path structure with only one vertex's edges not consistent with this structure. We delve into more detail on the performance of the bounds on this graph in the appendix.  ,

The Yellowstone \citep{yellowstone_foodweb} data set a small digraph representing a food web for Yellowstone National Park \citep{yellowstone_foodweb}, where vertices represent animal species and there is an edge from $u$ to $v$ if $u$ is predated by $v$.  As shown in Figure~\ref{fig:YellowstoneTrophicCascade}, this network can be partitioned into four clusters exhibiting an almost perfect directed path structure: only the outgoing edges for \emph{Mule deer} are not consistent with this structure. Indeed, $\Psi_4 = 0.027$ and Spectral Clustering using the Hermitian Laplacian perfectly recovers these clusters. Theorem~\ref{thm:digraph} suggests an error bound less than $0.086$, which is close to the actual value \(\|f_1 - \beta \chi_{\mathcal{S}} \|^2 = 0.039\). On the other hand, Laenen and Sun's result can only obtain an upper bound of $0.662$, which is not indicative of the actual performance of Spectral Clustering.

St. Marks Seagrass, St. Martin Island, and Ythan Estuary data sets \citep{cosin_foodwebs_dataset} are networks representing other food webs. % of, respectively, $15$, $49$, $45$,  and $135$ vertices. 
While these data sets do not present a cluster structure as obvious as the COVID-19 or Yellowstone data sets, our results imply there exists a nontrivial correlation between the indicator vector of the clusters and the bottom eigenvector of the Hermitian Laplacian. Notice that in all these three cases, \(\eta < 1\), making Laenen and Sun's bounds uninformative.

\begin{figure}[h]
    \centering
    \includegraphics[width=0.7\textwidth]{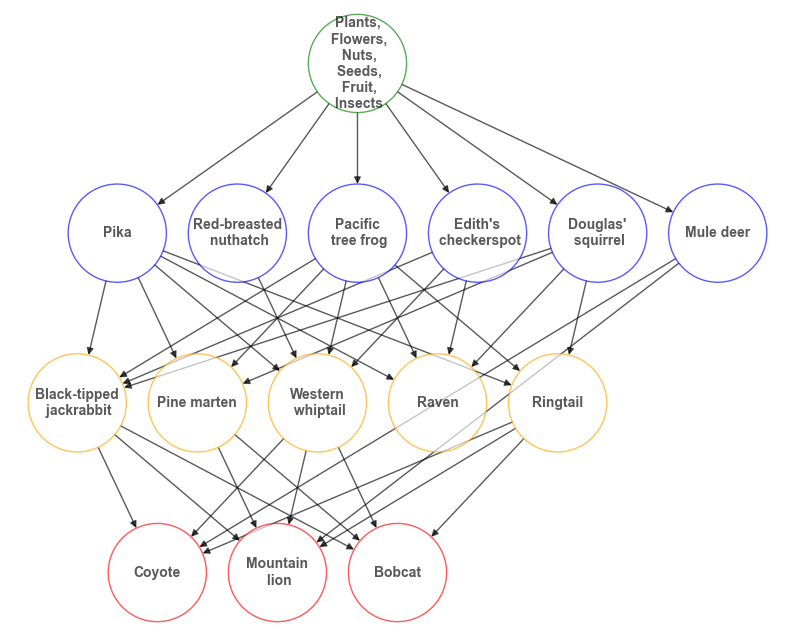}
    \caption{A directed graph illustrating a food web for Yellowstone National Park, United States.}
    \label{fig:YellowstoneTrophicCascade}
\end{figure}

\section{Conclusion}
We have presented a generalised framework for analysing the performance of Spectral Clustering. Our work generalises and improve upon the line of work based on the structure theorem of \citet{peng2015partitioning}. In particular, we have shown that spectral clustering works well as long as the bottom eigenvalues of the Laplacian matrix of an undirected graph can be divided in well-separated groups. Experimental results show that our bounds accurately predict the performance of spectral clustering on a wide range of both synthetic and real-world networks where previous analyses failed.

Furthermore, we have applied our techniques to analyse spectral clustering on Hermitian Laplacians for digraphs, as defined by \citet{cucuringu2020hermitian}. By exploiting a new cost function, which we call the cyclic $k$-way expansion, we are able to analyse the performances of spectral clustering on digraphs, vastly outperforming previous analyses.

%We have adapted the core result from \cite{macgregor2022tighter} to apply to a general symmetric positive semi-definite matrix in our Theorem~\ref{thm:general}. We have then demonstrated the impact of this generalisation by proposing a Structure Theorem for Digraphs, along with a new cost function we call the cyclic $k$-way expansion. In addition, we propose a novel advancement with our Theorem~\ref{thm:rec} which illustrates how Spectral Clustering only requires the smallest eigenvalues appear in groups well-separated from the rest
%of the matrix representation’s spectrum in order to perform well. This hierarchical structure of clusters is present in many real-world networks as well as SBMs as shown in our experiments. We believe our results provide new, valuable insight into the conditions under which Spectral Clustering performs well, as well as providing new a priori bounds that can be used as heuristics for the performance of Spectral Clustering. 
%Further work could explore extending the directed graph setting to 

\newpage

% For natbib users:
\bibliographystyle{unsrtnat}
\bibliography{reference}
% For bibLaTeX users:
% \printbibliography

\title{Proofs and Additional Experiments \\(Appendix)}
\maketitle
\appendix

 \section{Omitted proofs and results}
\subsection{Section~\ref*{sec:general}}
\subsubsection{Proof of Lemma~\ref*{lem:kmeans}}

\begin{proof}[Proof of Lemma~\ref{lem:kmeans}]
%Let $A_1,\dots,A_k$ be the partition of $V$ returned by ~\ref*{alg:spectral} and $S_$ 
% which means
% \[
% \sum_{j=1}^k \sum_{u \in A_j} d(u) \| \Tilde{F}(u,.) -  m_j \|^2 \le U.
% \]
Let $c_i = \sum_{u \in A_i} d(u) \Tilde{F}(u,.)$ be the centroid of $A_i$ for $i=1,\dots,k$. Let $c \colon V \to \mathbb{R}^{k}$ be a map from a vertex to its corresponding centroid, i.e., $c(u) = c_i$ if $u \in A_i$.

Since $A_1,\dots,A_k$ is a $(1+\alpha)$ approximation of the optimal $k$-means partition, we have that

\begin{align}
\sum_{j = 1}^k \sum_{u \in A_j}d(u) \|\Tilde{F}(u,.) - c_j\|^2 
& = \sum_{j = 1}^k \sum_{u \in S_j} d(u)\|\Tilde{F}(u,.) -  c(u)\|^2 \nonumber \\
& \leq (1+\alpha) \sum_{j = 1}^k \sum_{u \in S_j}d(u) \|\Tilde{F}(u,.) - \mu_j\|^2 \nonumber \\
& \leq (1+\alpha) \sum_{j = 1}^k \sum_{u \in S_j} \|F(u,.) - G(u,.)\|^2 \leq (1+\alpha)U. \label{eq:ubound}
\end{align}

By applying the triangle inequality and the simple inequality $(x-y)^2 \ge \frac{x^2}{2} - y^2$, we obtain the following:
 \begin{align*}
\sum_{j = 1}^k \sum_{u \in S_j} d(u)\|\Tilde{F}(u,.) - c(u)\|^2 
& \geq \sum_{j = 1}^k \sum_{u \in S_j}d(u) (\|\mu_j - c(u)\| - \|\Tilde{F}(u,.) - \mu_j\|)^2 \\
& \geq \sum_{j = 1}^k \sum_{u \in S_j} \frac{1}{2}d(u)\|\mu_j - c(u)\|^2 - \sum_{j = 1}^k \sum_{u \in S_j} d(u)\|\Tilde{F}(u,.) -  \mu_j\|^2 \\
& \geq \sum_{j = 1}^k \sum_{u \in S_j} \frac{1}{2}d(u)\|\mu_j - c(u)\|^2 - U,
\end{align*}
where the last inequality follows from Equation~\ref{eq:ubound}. 

% Rearranging, we have that
% \begin{equation} \label{eq:means_bound}
% \sum_{j = 1}^k \sum_{u \in S_j} d(u)\|\mu_j - c(u)\|^2 \leq 4U.
% \end{equation}

By the assumption \(\|\mu_i - \mu_j\| \geq \mathcal{D}\) for all \(i \ne j\), it holds that \(\|c_i - c_j\| \geq \mathcal{D}/2\). Otherwise, there would be \(\mu_\ell\) such that \(\|\mu_\ell - c_i\| \geq \mathcal{D}/2\) for any \(i\), but this would violate our assumption on $U$. Indeed, assume by contradiction \(\|\mu_l - c_i\| \geq \mathcal{D}/2\) for any \(i\). Then,
\begin{align*}
(1+\alpha) U &\ge \sum_{j = 1}^k \sum_{u \in S_j} d(u)\|\tilde{F}(u,.) - c(u)\|^2 \\ 
&\ge \frac{1}{2}\sum_{u \in S_\ell} d(u)\|\mu_\ell - c(u)\|^2 - U \\
&\ge \frac{1}{4}\cdot \mathcal{D} \cdot \vol(S_\ell) - U.
\end{align*}

Therefore, $U \ge \frac{\mathcal{D} \cdot \vol(S_\ell)}{4(2+\alpha)}$, contradicting our assumption on $U$.
\color{black}

We say that a vertex $u \in S_j$ is \emph{misclassified} if $\|c(u) - \mu_j\| \ge \mathcal{D}/2$. Let $\sigma \colon [k] \to [k]$ be the permutation of the indices minimising the volume of the symmetric difference $\sum_{i=1}^k \vol(S_i \setminus A_{\sigma(i)} \cup A_{\sigma(i)} \setminus S_i)$. Notice that we can obtain an upper bound to this volume by bounding the volume of misclassified vertices. We can upper bound the latter by noticing that, whenever we misclassify a vertex, we pay a price of $\Omega(D^2)$ in the $k$-means cost. Noting this, we obtain the following bound:
\begin{align*}
\sum_{j = 1}^k \sum_{u \in S_j} d(u)\|\Tilde{F}(u,.) - c(u)\|^2 
& \geq \sum_{j = 1}^k \sum_{u \in S_j} \frac{1}{2}d(u)\|\mu_j - c(u)\|^2 - U \\
& \geq \frac{\mathcal{D}^2}{8}  \text{vol}(\text{misclassified vertices}) - U,
\end{align*}

Together with Equation~\ref{eq:ubound}, this implies that
\[
\min_{\sigma} \sum_{i=1}^k \vol(S_i \setminus A_{\sigma(i)} \cup A_{\sigma(i)} \setminus S_i) \le \text{vol}(\text{misclassified vertices}) \leq \frac{8(2+ \alpha)U}{\mathcal{D}^2}.
\]
\end{proof}

In this section we prove the results from Section ~\ref*{sec:general}. 
For the following proofs, we must first define the matrices \(Q,R \in \mathbb{C}^{N \times N}\) which map between \(F\) and \(G\). Precisely,
\begin{align*}
    G = FQ \\
    F = GR.
\end{align*}
In the following lemma we prove some basic properties of \(Q\) and \(R\).
\begin{lem}\label{lem:PropertiesOfQ}
    We have the following basic properties of \(Q\):
\begin{enumerate}
    \item \(Q\) is orthogonal.
    \item \(R = Q^*\).
\end{enumerate}
\end{lem}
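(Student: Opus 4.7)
The plan is to leverage the fact that, in the setup preceding the lemma, both $F$ and $G$ are $N\times N$ matrices whose columns form orthonormal bases of $\mathbb{C}^N$. Thus both are unitary matrices satisfying $F^*F = FF^* = I$ and $G^*G = GG^* = I$. Once this is noted, everything reduces to simple manipulations of the defining relations $G = FQ$ and $F = GR$, so I expect no real obstacle here — this lemma is purely bookkeeping to fix notation used in the subsequent proofs of Theorems \ref{thm:general} and \ref{thm:rec}.

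For part (1), I would first isolate $Q$ by left-multiplying $G = FQ$ by $F^*$: since $F^*F = I$, this gives $Q = F^*G$. Then I would verify unitarity directly by computing $Q^*Q = (F^*G)^*(F^*G) = G^*FF^*G = G^*G = I$, where the second-to-last equality uses that $F$ (being square with orthonormal columns) also satisfies $FF^* = I$. The computation $QQ^* = F^*GG^*F = F^*F = I$ is symmetric, so $Q$ is unitary.

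For part (2), I would apply the same trick to $F = GR$: left-multiplying by $G^*$ and using $G^*G = I$ yields $R = G^*F$. Comparing with the expression $Q = F^*G$ derived above, one immediately sees that $R = G^*F = (F^*G)^* = Q^*$, which is the claim. I would briefly remark that this also confirms consistency of the two defining equations: substituting $G = FQ$ into $F = GR$ yields $F = FQR$, forcing $QR = I$, which is of course consistent with $R = Q^*$ together with $Q$ being unitary.

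Since the paper uses ``orthogonal'' and ``orthonormal'' somewhat interchangeably in the complex setting (for instance in the sketch proof preceding this lemma), I would be a little careful in the write-up to note that the natural statement is that $Q$ is unitary, and that ``orthogonal'' is being used in that generalised sense. Otherwise no subtlety arises.
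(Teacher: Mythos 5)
Your argument is correct and is essentially identical to the paper's: both derive $Q = F^*G$ from $G = FQ$ using unitarity of $F$, verify $Q^*Q = QQ^* = I$ by the same two-line computation, and obtain $R = G^*F = Q^*$ in the same way. Your added remark about ``orthogonal'' really meaning ``unitary'' over $\mathbb{C}$ is a fair terminological point but does not change the substance.
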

\begin{proof}
    As \(F\) is orthogonal, we have that \(Q = F^*G\). Noting that \(G\) is also orthogonal, we have that
    \[ Q^* Q = (F^*G)^*(F^*G) = G^*FF^*G = I\]
    and similarly, \(QQ^* = I\) so \(Q\) is orthogonal. \\
    For 2., Clearly,
    \[R = G^*F = Q^*\]
    as \(G\) is orthogonal.
\end{proof}

\subsubsection{Proof of Theorem~\ref*{thm:general} and Corollary ~\ref*{cor:structure}}
Below we use Lemma~\ref{lem:PropertiesOfQ} to prove our Theorem ~\ref{thm:general}.% We restate the result for convenience.
%\general*

\begin{proof}[Proof of Theorem~\ref*{thm:general}]

    As \(G = F Q\), this implies that \(g_i = \sum_{j=1}^N Q_{ij} f_j\). We choose \(\hat{f}_i = \sum_{j=1}^k Q_{ij} f_j\).  Now notice that,
    
    \[\gamma_i = \bar{g}_i^* M \bar{g}_i = \sum_{j=1}^N |Q_{ij}|^2 \lambda_j.\]
    As \(Q\) is orthogonal, \(\sum_{j=1}^N |Q_{ij}|^2 = 1\). This coupled with the assumption that \(\lambda_1 \leq \lambda_2 \leq \hdots \leq \lambda_N\) provides the following bound:
    
    \begin{align*}
        \gamma_i = \sum_{j=1}^N |Q_{ij}|^2 \lambda_j & \geq \lambda_1  \sum_{j=1}^k |Q_{ij}|^2 + \lambda_{k+1} (1- \sum_{j=1}^k |Q_{ij}|^2)
        \\  \Rightarrow \gamma_i & \geq \sum_{j=1}^k |Q_{ij}|^2 (\lambda_1 - \lambda_{k+1}) + \lambda_{k+1}
    \end{align*}
    Rearranging for \(\sum_{j=1}^k|Q_{ij}|^2\), we get the following:
    \[\sum_{j=1}^k |Q_{ij}|^2 \geq \frac{ \lambda_{k+1} - \gamma_i}{\lambda_{k+1}- \lambda_1}\]
    
    It follows that 
    \[1 - \sum_{j=1}^k |Q_{ij}|^2 \leq \frac{\gamma_i - \lambda_1}{\lambda_{k+1} - \lambda_1}\].
    As \(F = G R\), \(f_i = \sum_{j=1}^N R_{ij} g_j\). We choose \(\hat{g}_i = \sum_{j=1}^k R_{ij} g_j\).
    Now notice that,
    \[\|f_i - \hat{g}_i\|^2 = 1 - \sum_{j=1}^k |R_{ij}|^2.\]
    Summing over \(i=1, \hdots, k\) gives the following.
    \[\sum_{i=1}^k \|f_i - \hat{g}_i\|^2 = k - \sum_{i=1}^k \sum_{j=1}^k |R_{ij}|^2\]
    Recall from Lemma~\ref{lem:PropertiesOfQ} that \(Q^* = R\) so \( \sum_{i=1}^k \sum_{j=1}^k |R_{ij}|^2 = \sum_{i=1}^k \sum_{j=1}^k |Q_{ij}|^2\). So, it follows that:
    \[\sum_{i=1}^k \|f_i - \hat{g}_i\|^2 \leq \sum_{i=1}^k \frac{\gamma_i - \lambda_1}{\lambda_{k+1} - \lambda_1} .\]

\end{proof}

By a particular choice of \(\{g_i\}_{i=1}^k\) and \(M\) in Theorem~\ref{thm:general}, we obtain the structure theorem from \cite{macgregor2022tighter}.
%\structure*

\begin{proof}[Proof of Corollary~\ref*{cor:structure}]
     With the choice of \(g_i = \frac{D^{1/2}\chi_i}{\|D^{1/2}\chi_i\|}\) for \(i=1,\hdots,k\), we have the following:
    \begin{align*}
        \gamma_i & = g_i^* \mathcal{L} g_i \\
        & = \frac{\chi_i^*(D-A)\chi_i}{\chi_i^*D \chi_i} \\
        & = \frac{E(S_i, V - S_i)}{\text{vol}(S_i)} \\
        & \leq \max_{i=1, \hdots, k} \frac{E(S_i, V - S_i)}{\text{vol}(S_i)} = \rho(k)
    \end{align*}
Applying Theorem~\ref{thm:general} with \(M = \mathcal{L}\) and noting that \(\lambda_1 = 0\) completes the proof. 
\end{proof}

\subsubsection{Proof of Theorem~\ref*{thm:rec} and Corollary~\ref*{cor:RemoveFirstEvec}}
%\recursive*
\begin{proof}[Proof of Theorem~\ref*{thm:rec}]
     Since \(f_1, \hdots, f_N\) form an orthonormal basis, we can write that \(g_i = \sum\limits_{j=1}^N Q_{ij}f_j\) and we choose \(\hat{f}_i = \sum\limits_{j=1}^q Q_{ij}f_j\) for \(i = 1, \hdots, q\) (summing only up to \(q\) by assumption) and \(\hat{f}_i = \sum\limits_{j=1}^k Q_{ij}f_j\) for \(i = q+1, \hdots, k\). Now let \(i\) be some index between \(q+1\) and \(k\),
    \begin{align*}
        \gamma_i = g_i^*Mg_i = \sum_{j=1}^N \lambda_{j} |Q_{ij}|^2 & \geq \lambda_1 \sum_{j=1}^{q} |Q_{ij}|^2 + \lambda_{q + 1} \sum_{j = q + 1}^{k} |Q_{ij}|^2 + \lambda_{k+1} \sum_{j = k+1}^n |Q_{ij}|^2 \\
        & = \lambda_1 \sum_{j=1}^{q} |Q_{ij}|^2  + \lambda_{q + 1} \sum_{j = q + 1}^{k} |Q_{ij}|^2 + \lambda_{k + 1}\left(1 -  \sum_{j = q + 1}^{k} |Q_{ij}|^2 -  \sum_{j=1}^{q} |Q_{ij}|^2 \right)
    \end{align*}
Rearranging for \(\sum\limits_{j=q + 1}^{k} |Q_{ij}|^2\), we get:

\begin{align*}
\sum\limits_{j=q + 1}^{k} |Q_{ij}|^2 & \geq \frac{\lambda_{k+ 1}\left( 1 - \sum\limits_{j=1}^{q} |Q_{ij}|^2 \right) - \gamma_i  + \lambda_1 \sum\limits_{j=1}^{q} |Q_{ij}|^2}{\lambda_{k + 1} - \lambda_{q + 1}} \\
& \geq \frac{\lambda_{k + 1}\left( 1 - \sum\limits_{j=1}^{q} |Q_{ij}|^2 \right) - \gamma_i}{\lambda_{k + 1} - \lambda_{q + 1}} \\
\end{align*}

Next, we sum over \(i=q+1, \hdots, k\) so we have that:
\begin{align*}
\sum\limits_{i= q + 1}^{k}\sum\limits_{j=q + 1}^{k} |Q_{ij}|^2
& \geq \frac{\lambda_{k + 1}\left( (k-q) - \sum\limits_{i= q + 1}^{k}\sum\limits_{j=1}^{q} |Q_{ij}|^2 \right) - \sum\limits_{i= q + 1}^{k}\gamma_i}{\lambda_{k + 1} - \lambda_{q + 1}}.
\end{align*}
Since \(\hat{f}_i \in \spn\{f_1, \hdots, f_q\}\) for \(i \in \{1, \hdots, q\}\), the sum in the brackets can be bounded as follows:
\[\sum\limits_{i= q + 1}^{k}\sum\limits_{j=1}^{q} |Q_{ij}|^2 \leq  \sum\limits_{i= q + 1}^{N}\sum\limits_{j=1}^{q} |Q_{ij}|^2 = q - \sum\limits_{i= 1}^{q}\sum\limits_{j=1}^{q} |Q_{ij}|^2 = \sum_{i=1}^q \|\hat{f}_i - g_i\|^2 \]
So,
\[
\sum\limits_{i= q + 1}^{k}\sum\limits_{j=q + 1}^{k} |Q_{ij}|^2
\geq \frac{\lambda_{k + 1}\left( (k-q) -  \sum\limits_{i=1}^{q} \|f_i - \hat{g}_i\|^2 \right) - \sum\limits_{i= q + 1}^{k}\gamma_i}{\lambda_{k + 1} - \lambda_{q + 1}}
\]
Now, noting that \(Q\) is orthogonal and therefore its rows are unit length, it follows that:
\[\sum\limits_{i= q + 1}^{k} \|\hat{f}_i - g_i\|^2 = 
(k-q) - \sum\limits_{i= q + 1}^{k}\sum\limits_{j= 1}^{k}|Q_{ij}|^2 \leq (k-q) - \sum\limits_{i= q + 1}^{k}\sum\limits_{j= q+1}^{k}|Q_{ij}|^2\]
Therefore,
\[\sum\limits_{i= q + 1}^{k} \|\hat{f}_i - g_i\|^2 \leq \frac{\sum \limits_{i=q+1}^k \gamma_i - (k-q)\lambda_{q+1} - \lambda_{k+1}\sum\limits_{i=1}^{q} \|\hat{f}_i - g_i\|^2 }{\lambda_{k + 1} - \lambda_{q + 1}}\]
Now, we show the same bound for \(\sum\limits_{i= q + 1}^{k} \|f_i - \hat{g}_i\|^2\). Choosing \(\hat{g}_i = \sum\limits_{j=1}^{q} R_{ij}f_j\) for \(i = 1, \hdots, q\), Lemma~\ref{lem:PropertiesOfQ} implies that
\[\sum\limits_{i=1}^{q} \|\hat{f}_i - g_i\|^2 = \sum\limits_{i=1}^{q} \|f_i - \hat{g}_i\|^2\]
as \(Q = R^*\).
Similarly, choosing \(\hat{g}_i = \sum\limits_{j=1}^{k} R_{ij}f_j\) for \(i = q+1, \hdots, k\) implies that 
\[\sum\limits_{i=q+1}^{k} \|\hat{f}_i - g_i\|^2 = \sum\limits_{i=q+1}^{k} \|f_i - \hat{g}_i\|^2\]
which completes the proof.
\end{proof}

%\RemoveFirstEvec*

\begin{proof}[Proof of Corollary~\ref*{cor:RemoveFirstEvec}]
    Choosing \(g_1 = f_1\) and remaining indicator vectors \(g_2, \hdots, g_k\) all orthonormal means the conditions for Theorem~\ref{thm:rec} are satisfied with \(q=1\). 
    \(\hat{g}_1 = g_1 = f_1\)  means that:
    \begin{enumerate}
        \item \(\|\hat{g}_1 - f_1\|^2 = 0\)
        \item \(\sum\limits_{i=2}^k \|f_i - \hat{g}_i\|^2 \leq \frac{\sum_{i=2}^k (\gamma_i - \lambda_2)}{\lambda_{k+1} - \lambda_2} \)
    \end{enumerate}
    
    which completes the proof.
\end{proof}

\subsubsection{Spectral Clustering performance on SBMs}

Using Corollary~\ref*{cor:RemoveFirstEvec} we have a result regarding the performance of Spectral Clustering on SBMs.

We generate $\mathcal{G} \sim SBM(n,k,p,q)$ as follows: we divide $kn$ vertices into $k$ communities $S_1,\dots,S_k$ of equal size $n$. For any $u \in S_i, v \in S_j$, we place an edge between $u$ and $v$ with probability $p$ if $i=j$, with probability $q$ otherwise. We recover the following well-known result.
\newline
\begin{corollary}
\label{cor:sbm}
Let \(\mathcal{G} \sim SBM(n,k,p,q)\) for some \(k \geq 2\). Choose \(M = {L}\) and let \(g_1, \hdots, g_k \in \mathbb{C}^{nk}\) be the first \(k\) eigenvectors of \(\mathbb{E}{L}\). Assume  $p-q \ge 40 \sqrt{pk\log(kn)/n}$. Then, with high probability,

\[
\sum_{i=2}^k \|f_i - \hat{g}_i\|^2 = \mathcal{O}\left(\frac{k}{p-q} \sqrt{\frac{pk\log(kn)}{n}} \right).
\]
\end{corollary}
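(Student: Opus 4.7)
The plan is to invoke Corollary~\ref{cor:RemoveFirstEvec} (which, by inspection of its proof, applies equally to the combinatorial Laplacian $L$ since Theorem~\ref{thm:rec} holds for any Hermitian PSD matrix), taking the orthonormal family $g_1,\dots,g_k$ to be the bottom $k$ eigenvectors of $\mathbb{E}L$. The three quantities we then need to control are: the Rayleigh quotients $\gamma_i = g_i^* L g_i$, the gap $\lambda_{k+1}(L) - \lambda_2(L)$, and the identity $g_1 = f_1$.

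First I would diagonalise $\mathbb{E}L$. Because the communities have equal size and every vertex has the same expected degree $d = (n-1)p + (k-1)nq$, we get $\mathbb{E}L = dI - \mathbb{E}A$ with $\mathbb{E}A = (p-q)\sum_{i=1}^{k} \chi_i \chi_i^\top + q\mathbf{1}\mathbf{1}^\top - pI$. A direct computation shows that $\mathbb{E}L$ has eigenvalue $0$ with eigenvector proportional to $\mathbf{1}$, eigenvalue $knq$ on the $(k-1)$-dimensional subspace of cluster indicators orthogonal to $\mathbf{1}$, and eigenvalue $n(p + (k-1)q)$ on the orthogonal complement. In particular, $\lambda_{k+1}(\mathbb{E}L) - \lambda_2(\mathbb{E}L) = n(p-q)$, and under the assumption on $p-q$ the graph is connected w.h.p., so $f_1 = \mathbf{1}/\sqrt{nk} = g_1$ remains the bottom eigenvector of $L$ with $\lambda_1(L)=0$.

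Next I would use matrix concentration (Matrix Bernstein on $A - \mathbb{E}A$ plus a Chernoff bound on the degrees) to establish $\|L - \mathbb{E}L\|_{op} \leq C\sqrt{pnk\log(nk)}$ with high probability. Weyl's inequality then yields $\lambda_2(L) \geq knq - \|L - \mathbb{E}L\|_{op}$ and $\lambda_{k+1}(L) \geq n(p+(k-1)q) - \|L - \mathbb{E}L\|_{op}$, while each Rayleigh quotient satisfies
\[
\gamma_i = g_i^*\,\mathbb{E}L\,g_i + g_i^*(L - \mathbb{E}L) g_i \leq knq + \|L - \mathbb{E}L\|_{op}.
\]
Hence $\gamma_i - \lambda_2(L) \leq 2\|L - \mathbb{E}L\|_{op}$ and $\lambda_{k+1}(L) - \lambda_2(L) \geq n(p-q) - 2\|L - \mathbb{E}L\|_{op}$.

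Finally, the assumption $p-q \geq 40\sqrt{pk\log(kn)/n}$ translates to $\|L-\mathbb{E}L\|_{op} \leq n(p-q)/40$, so the denominator above is at least $n(p-q)/2$, while the numerator $\sum_{i=2}^{k}(\gamma_i - \lambda_2(L))$ is at most $2(k-1)\|L-\mathbb{E}L\|_{op}$. Plugging into Corollary~\ref{cor:RemoveFirstEvec} gives an overall bound of order $\frac{k\|L-\mathbb{E}L\|_{op}}{n(p-q)} = \mathcal{O}\!\left(\frac{k}{p-q}\sqrt{\frac{pk\log(kn)}{n}}\right)$, as claimed. I expect the main obstacle to be selecting and citing the right form of the Laplacian concentration bound: it must hold down to the near-threshold regime of the hypothesis, where the Erd\H{o}s--R\'enyi style sparse matrix concentration of Feige--Ofek (or a similarly sharpened matrix Bernstein applied separately to $A-\mathbb{E}A$ and $D-\mathbb{E}D$) is needed rather than the naive spectral norm bound.
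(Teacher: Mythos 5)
Your proposal is correct and takes essentially the same route as the paper: apply Corollary~\ref{cor:RemoveFirstEvec} with $g_1,\dots,g_k$ the bottom eigenvectors of $\mathbb{E}L$, diagonalise $\mathbb{E}L$ to find the eigenvalues $0$, $knq$ (multiplicity $k-1$), and $np+(k-1)nq$, bound $\|L-\mathbb{E}L\|=\mathcal{O}(\sqrt{pkn\log(kn)})$, transfer this to the eigenvalues of $L$ via Weyl / Courant--Fischer, and bound each Rayleigh quotient $\gamma_i$ by $knq + \|L-\mathbb{E}L\|$. The one place where you deviate is the concentration step: you propose bounding $A-\mathbb{E}A$ and $D-\mathbb{E}D$ separately and flag that a sharp sparse-graph tool such as Feige--Ofek might be needed, whereas the paper avoids the split entirely by writing $L = \sum_{(u,v)\in E} X_{uv}$ as a sum of independent edge Laplacians $X_{uv} = M_{uv}\cdot\mathbf{1}[u\sim v]$ and applying the Chung--Radcliffe matrix concentration inequality once, directly obtaining $\|L-\mathbb{E}L\|\le 18\sqrt{pkn\log(kn)}$ with high probability; since the hypothesis $p-q\ge 40\sqrt{pk\log(kn)/n}$ forces $pkn \gtrsim k^2\log(kn)$, the matrix Bernstein tail is already strong enough, and no Feige--Ofek-style refinement is actually required. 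A small notational slip: to get $\lambda_{k+1}(L)-\lambda_2(L) \ge n(p-q) - 2\|L-\mathbb{E}L\|$ you need the \emph{upper} bound $\lambda_2(L)\le knq + \|L-\mathbb{E}L\|$ (again Weyl), not the lower bound you wrote; this is immediate but should be stated.
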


Corollary~\ref{cor:sbm} together with Lemma~\ref{lem:kmeans} implies Spectral Clustering %using the bottom $k$ eigenvectors of \(\mathcal{G} \sim SBM(n,k,p,q)\) 
misclassifies vertices with a volume of at most $\mathcal{O}\left(\frac{k^2}{p-q} \sqrt{p(kn)\log(kn)} \right)$. We note that whilst this is not an improvement on what can be achieved with results from perturbation analysis such as the Davis-Kahan theorem \citep{davis1970rotation}, the original structure theorem achieves merely a constant bound for SBMs. 

In order to prove Corollary~\ref*{cor:sbm}, we require some prerequisite results.
\newline
\begin{thm} [\cite{chung2011spectra}] \label{thm:Chung bound}
    Let \(X_1, X_2,\hdots, X_m\) be independent random \(d \times d\) Hermitian matrices. Moreover, assume that \(\|X_j - \mathbb{E}(X_j) \| \leq M\) for all \(j\), and let \(\sigma^2  = \|\sum_{j=1}^m \mathbb{E}((X_j - \mathbb{E}(X_j))^2)\|\). Let \(X = \sum_{j=1}^m X_j\). Then, for any \(a>0\), it holds that:
    \[
    \mathbb{P}(\|X - \mathbb{E}(X)\| >a) < 2d \exp\left(\frac{-a^2}{2 \sigma^2 + 2Ma/3}\right)
    \]
\end{thm}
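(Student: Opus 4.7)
The plan is to adapt the classical scalar Bernstein inequality to the matrix setting via the Ahlswede--Winter / Tropp matrix Chernoff machinery. First, I would split the two-sided operator norm tail into two one-sided eigenvalue tails,
\[
\mathbb{P}(\|X - \mathbb{E} X\| > a) \leq \mathbb{P}(\lambda_{\max}(X - \mathbb{E} X) > a) + \mathbb{P}(\lambda_{\max}(\mathbb{E} X - X) > a),
\]
so that the factor of $2$ in the statement appears from treating $X - \mathbb{E} X$ and its negation symmetrically (which explains the $2d$ rather than $d$ prefactor). A single Markov step applied to the trace of the matrix exponential then reduces each one-sided bound to controlling an exponential moment:
\[
\mathbb{P}(\lambda_{\max}(X - \mathbb{E} X) > a) \leq e^{-\theta a} \cdot \mathbb{E}\bigl[\operatorname{tr} \exp(\theta(X - \mathbb{E} X))\bigr]
\]
for any $\theta > 0$.

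The central obstacle is that $\operatorname{tr} \exp$ does \emph{not} factorise across independent summands when the $Y_j \coloneq X_j - \mathbb{E} X_j$ do not commute, so the scalar identity $\mathbb{E}[e^{\sum \xi_j}] = \prod \mathbb{E}[e^{\xi_j}]$ is unavailable. The workaround is Lieb's concavity theorem (or, at the very least, an iterative Golden--Thompson argument), which yields a subadditive substitute:
\[
\mathbb{E}\bigl[\operatorname{tr} \exp(\theta \textstyle\sum_j Y_j)\bigr] \leq d \cdot \exp\bigl(\lambda_{\max}(\textstyle\sum_j \log \mathbb{E}[e^{\theta Y_j}])\bigr).
\]
This is the deepest ingredient of the proof and the step I would expect to take the most care; the rest of the argument is essentially calibration.

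Next I would bound the individual cumulant generating functions. Using $\|Y_j\| \leq M$ and the scalar inequality $e^{x} \leq 1 + x + \tfrac{x^2/2}{1 - x/3}$ valid for $x < 3$, lifted to the semidefinite order via spectral calculus, one obtains
\[
\mathbb{E}[e^{\theta Y_j}] \preceq I + \frac{\theta^2 / 2}{1 - \theta M / 3}\,\mathbb{E}[Y_j^2],
\]
and $\log(I + A) \preceq A$ for $A \succeq 0$ gives $\log \mathbb{E}[e^{\theta Y_j}] \preceq \frac{\theta^2/2}{1 - \theta M/3}\,\mathbb{E}[Y_j^2]$, valid for $\theta < 3/M$. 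Summing in the semidefinite order and invoking the definition of $\sigma^2$ collapses everything into the one-parameter tail bound
\[
\mathbb{P}(\lambda_{\max}(X - \mathbb{E} X) > a) \leq d \cdot \exp\!\left(-\theta a + \frac{\theta^2 \sigma^2}{2 - 2\theta M/3}\right).
\]

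Finally, I would optimise over $\theta$ by choosing $\theta = a/(\sigma^2 + Ma/3)$, which lies in the admissible range $(0, 3/M)$ whenever $a > 0$. Substituting this choice yields the Bernstein-style exponent $-a^2/(2\sigma^2 + 2Ma/3)$ stated in the theorem, and combining the two one-sided tails recovers the $2d$ prefactor. The only genuinely hard step is the non-commutative subadditivity via Lieb; everything else is a careful but routine translation of the scalar Bernstein argument into the language of the semidefinite order.
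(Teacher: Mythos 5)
The paper does not prove this theorem: it is imported verbatim from Chung and Radcliffe~\cite{chung2011spectra}, so there is no internal proof to compare against. Your sketch is a correct and standard derivation of the matrix Bernstein inequality via the Laplace-transform method of Tropp: the symmetrisation producing the factor of two, the Markov step on $\operatorname{tr}\exp$, the subadditivity of the matrix cumulant generating function via Lieb's concavity theorem (indeed the one non-trivial ingredient), the semidefinite lift of the scalar bound $e^x \leq 1 + x + \tfrac{x^2/2}{1-x/3}$ together with $\log(I+A) \preceq A$, and the calibration $\theta = a/(\sigma^2 + Ma/3)$ all check out and reproduce the exponent $-a^2/(2\sigma^2 + 2Ma/3)$ exactly, with $\theta < 3/M$ automatically satisfied. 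One caveat on your parenthetical fallback: a pure Ahlswede--Winter argument via iterated Golden--Thompson yields only the weaker variance proxy $\sum_j \|\mathbb{E}[(X_j-\mathbb{E}X_j)^2]\|$ rather than $\|\sum_j \mathbb{E}[(X_j-\mathbb{E}X_j)^2]\|$; since the theorem as stated uses the norm of the sum, Lieb's theorem (or an equivalent, such as the concavity of $A \mapsto \operatorname{tr}\exp(H + \log A)$) is genuinely required and cannot be swapped out for Golden--Thompson alone.
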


\begin{thm}[Courant-Fischer \citep{horn2012matrix}] \label{Courant-Fischer}
Let \( A \) be a Hermitian matrix in \( \mathbb{C}^{N \times N} \). The eigenvalues \( \lambda_1, \lambda_2, \ldots, \lambda_N \) of \( A \), arranged in non-increasing order (\( \lambda_1 \geq \lambda_2 \geq \ldots \geq \lambda_N \)), can be characterized by the Courant-Fischer min-max principle as follows:

For \( k = 1, 2, \ldots, N \):
\[
\lambda_k = \max_{\substack{S \subseteq \mathbb{C}^N \\ \dim(S) = k}} \min_{\substack{x \in S \\ x \neq 0}} \frac{\langle x, Ax \rangle}{\langle x, x \rangle}
\]
and
\[
\lambda_k = \min_{\substack{T \subseteq \mathbb{C}^N \\ \dim(T) = N-k+1}} \max_{\substack{x \in T \\ x \neq 0}} \frac{\langle x, Ax \rangle}{\langle x, x \rangle}
\]

where \( \langle \cdot, \cdot \rangle \) denotes the standard inner product in \( \mathbb{C}^N \).
\end{thm}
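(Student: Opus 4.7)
The plan is to reduce both characterizations to the spectral theorem for Hermitian matrices together with elementary dimension counting. By the spectral theorem, there exists an orthonormal basis $v_1, \ldots, v_N$ of $\mathbb{C}^N$ with $A v_i = \lambda_i v_i$, ordered so that $\lambda_1 \geq \cdots \geq \lambda_N$. For any nonzero $x = \sum_i c_i v_i$ the Rayleigh quotient expands as
\[
R(x) \;\coloneqq\; \frac{\langle x, Ax \rangle}{\langle x, x \rangle} \;=\; \frac{\sum_{i=1}^N \lambda_i |c_i|^2}{\sum_{i=1}^N |c_i|^2},
\]
a convex combination of the eigenvalues with weights $|c_i|^2 / \|x\|^2$. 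This immediately gives monotonicity bounds: if $x$ lies in $\spn\{v_i : i \in I\}$ for some index set $I$, then $\min_{i \in I} \lambda_i \leq R(x) \leq \max_{i \in I} \lambda_i$.

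For the max-min formulation, I would establish the two inequalities separately. First, to show the max-min is at least $\lambda_k$, I would exhibit the explicit subspace $S_\star = \spn\{v_1, \ldots, v_k\}$, which has dimension $k$; by the remark above, every nonzero $x \in S_\star$ satisfies $R(x) \geq \lambda_k$, with equality attained at $x = v_k$, so $\min_{x \in S_\star} R(x) = \lambda_k$. Second, to show the max-min is at most $\lambda_k$, I would take an arbitrary $k$-dimensional subspace $S$ and intersect it with $T_\star = \spn\{v_k, v_{k+1}, \ldots, v_N\}$, which has dimension $N - k + 1$. Since $\dim(S) + \dim(T_\star) = N + 1 > N$, the intersection contains some nonzero vector $x$, and all its nonzero components lie among indices $\geq k$, giving $R(x) \leq \lambda_k$; hence $\min_{x \in S} R(x) \leq \lambda_k$ for every such $S$.

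The min-max formulation follows by a symmetric argument, with the roles of the two subspaces exchanged. For the upper bound, I would choose $T_\star = \spn\{v_k, \ldots, v_N\}$ of dimension $N - k + 1$, on which $R(x) \leq \lambda_k$ with equality at $v_k$. For the lower bound, given any subspace $T$ of dimension $N - k + 1$, I would intersect it with $\spn\{v_1, \ldots, v_k\}$ (of dimension $k$); again the dimensions sum to $N + 1$, forcing a nonzero common vector $x$ with $R(x) \geq \lambda_k$, so $\max_{x \in T} R(x) \geq \lambda_k$.

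The main obstacle is essentially bookkeeping: the statement uses the non-increasing convention for eigenvalues, whereas the rest of the paper uses the non-decreasing one, so I must take care that in each of the four sub-arguments the correct subspace (the top-$k$ span versus the bottom-$(N-k+1)$ span) is paired with the correct bound on the Rayleigh quotient. Once the spectral decomposition is in place and the dimension-counting lemma $\dim(S) + \dim(T) > N \Rightarrow S \cap T \neq \{0\}$ is invoked, no further technical ingredient is needed, and the two characterizations fall out immediately.
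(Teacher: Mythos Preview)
Your proof is correct and is the standard argument for the Courant--Fischer theorem. However, the paper does not prove this statement at all: it is quoted as a classical result from \cite{horn2012matrix} and used only through its corollary (the Weyl-type perturbation bound $|\lambda_k(A) - \lambda_k(B)| \leq \|A - B\|$) in the SBM analysis. So there is nothing to compare against; your write-up simply supplies a proof where the paper defers to the literature.
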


\begin{corollary}[Courant-Fischer Corollary] \label{CF Corollary}
Let \( A \) and \( B \) be Hermitian matrices in \( \mathbb{C}^{N \times N} \), and let \( \lambda_k(A) \) and \( \lambda_k(B) \) denote their \( k \)-th eigenvalues, respectively. Then
\[
|\lambda_k(A) - \lambda_k(B)| \leq \|A - B\|.
\]
\end{corollary}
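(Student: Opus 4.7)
The plan is to apply the max-min characterisation from Theorem~\ref{Courant-Fischer} directly, using a single subspace as a witness that transfers information between the two Rayleigh quotients. The core observation is that for any unit vector $x \in \mathbb{C}^N$, the two Rayleigh quotients differ by at most the operator norm of the perturbation, because $|\langle x, Ax\rangle - \langle x, Bx\rangle| = |\langle x, (A-B)x\rangle| \le \|A-B\|$ by Cauchy-Schwarz and the definition of the operator norm.

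First, I would fix the index $k$ and let $S^\ast$ be any $k$-dimensional subspace of $\mathbb{C}^N$ attaining the maximum in the Courant-Fischer formula for $A$, so that $\lambda_k(A) = \min_{x \in S^\ast,\, x \ne 0} \langle x, Ax\rangle / \langle x, x\rangle$. Then I would plug $S^\ast$ into the max-min formula for $B$ as a feasible (not necessarily optimal) subspace, giving
\[
\lambda_k(B) \;\ge\; \min_{x \in S^\ast,\, x \ne 0} \frac{\langle x, Bx\rangle}{\langle x,x\rangle} \;\ge\; \min_{x \in S^\ast,\, x \ne 0} \frac{\langle x, Ax\rangle}{\langle x,x\rangle} - \|A - B\| \;=\; \lambda_k(A) - \|A - B\|,
\]
where the middle inequality uses the pointwise Rayleigh quotient bound above applied uniformly over unit vectors in $S^\ast$.

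The role of $A$ and $B$ is symmetric, so the identical argument with the subspace attaining the maximum for $B$ yields $\lambda_k(A) \ge \lambda_k(B) - \|A - B\|$. Combining the two one-sided bounds gives $|\lambda_k(A) - \lambda_k(B)| \le \|A - B\|$, as required.

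There is no real obstacle here; the only subtlety is being careful that the minimum over a fixed subspace is a lower bound for the max-min value for $B$, which is immediate from Theorem~\ref{Courant-Fischer}, and that the uniform Rayleigh-quotient bound survives taking the minimum over $S^\ast$. I would write the argument symmetrically in a single short paragraph rather than treat the two directions separately, to emphasise that both inequalities follow from the same witness-subspace trick.
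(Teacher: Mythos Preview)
Your argument is correct and is the standard proof of this perturbation bound via the Courant--Fischer characterisation. The paper does not actually prove this corollary; it simply states it as a well-known consequence of Theorem~\ref{Courant-Fischer}, so there is nothing to compare against beyond noting that your witness-subspace argument is exactly the intended one-line derivation.
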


In the following lemma, we apply Theorem~\ref{thm:Chung bound} to obtain a bound on  $\|L - \mathbb{E}(L)\|$ for SBMs.

\begin{lem} \label{lem:bound on L for SBMs}
Suppose that \(G \sim SBM(n,k,p,q)\) for some \(k \geq 2\). Let \(L\) be the Laplacian of \(G\). Then, with high probability,
\[
     \|L - \mathbb{E}(L)\| \leq 18 \sqrt{pkn \log (kn)}.\label{statement1}
\]
\end{lem}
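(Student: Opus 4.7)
The plan is to apply the matrix concentration bound of Theorem~\ref{thm:Chung bound} after decomposing the Laplacian as
\[
L \;=\; \sum_{u<v} e_{uv}\, Y_{uv},
\]
where $e_{uv}\in\{0,1\}$ is the Bernoulli indicator of the edge $(u,v)$ with mean $p_{uv}\in\{p,q\}$ (depending on whether $u$ and $v$ lie in the same community), and $Y_{uv}=(\mathbf{e}_u-\mathbf{e}_v)(\mathbf{e}_u-\mathbf{e}_v)^{\top}$ is the elementary edge Laplacian (rank one with nonzero eigenvalue $2$). The summands $X_{uv}=e_{uv}Y_{uv}$ are independent Hermitian matrices and $L-\mathbb{E}(L)=\sum_{u<v}(X_{uv}-\mathbb{E}X_{uv})$.

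First I would bound the operator norm of each centred summand: since $\|Y_{uv}\|=2$ and $|e_{uv}-p_{uv}|\leq 1$, we get $\|X_{uv}-\mathbb{E}X_{uv}\|\leq 2$, so the uniform bound in Theorem~\ref{thm:Chung bound} (denoted $M$ there) may be taken as $2$. Next I would compute the matrix variance proxy. The key algebraic identity is $Y_{uv}^{2}=2Y_{uv}$ (immediate since $Y_{uv}$ is a rank-one projector scaled by $2$), which gives
\[
\sum_{u<v}\mathbb{E}\bigl((X_{uv}-\mathbb{E}X_{uv})^{2}\bigr) \;=\; 2\sum_{u<v} p_{uv}(1-p_{uv})\,Y_{uv} \;\preceq\; 2p\sum_{u<v} Y_{uv}.
\]
The sum $\sum_{u<v}Y_{uv}$ is exactly the combinatorial Laplacian of the complete graph $K_{kn}$, whose operator norm is $kn$. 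Hence $\sigma^{2}\leq 2p\cdot kn$.

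Finally I would substitute $a=18\sqrt{pkn\log(kn)}$, $d=kn$, $M=2$ and $\sigma^{2}\leq 2pkn$ into Theorem~\ref{thm:Chung bound}, obtaining
\[
\mathbb{P}\bigl(\|L-\mathbb{E}L\|>a\bigr) \;\leq\; 2kn\,\exp\!\left(-\frac{a^{2}}{4pkn+4a/3}\right).
\]
Under the regime $pkn\gtrsim\log(kn)$ (readily implied by the hypothesis of Corollary~\ref{cor:sbm}, which forces $p\geq 1600\,k\log(kn)/n$), the variance term $4pkn$ dominates the linear correction $4a/3$, so the denominator is at most $8pkn$ and the exponent is at least $\tfrac{324}{8}\log(kn)>40\log(kn)$. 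The right-hand side is then $(kn)^{-\Theta(1)}$, giving the claimed high-probability bound.

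The main obstacle is keeping the variance proxy tight enough to land the constant $18$: this rests entirely on the structural reductions $Y_{uv}^{2}=2Y_{uv}$ and $\sum_{u<v}Y_{uv}=L(K_{kn})$. Once these are in place, the rest is routine bookkeeping of the Bernstein denominator, aside from checking which regime (variance-dominated versus linear-dominated) is active under the parameter assumptions inherited from the corollary.
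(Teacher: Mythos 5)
Your proposal follows essentially the same route as the paper: decompose $L$ into independent rank-one edge Laplacians $Y_{uv}=(\mathbf{e}_u-\mathbf{e}_v)(\mathbf{e}_u-\mathbf{e}_v)^{\top}$ (the paper's $M_{uv}$), bound each centred summand by $2$, exploit $Y_{uv}^2=2Y_{uv}$ to compute the variance proxy, and apply the matrix Bernstein inequality of Theorem~\ref{thm:Chung bound}. The one small divergence is in how the variance proxy is bounded: you use the clean identity $\sum_{u<v}Y_{uv}=L(K_{kn})$ with $\|L(K_{kn})\|=kn$ to get $\sigma^2\leq 2pkn$ (which implicitly uses $q\leq p$, harmless given Corollary~\ref{cor:sbm}), whereas the paper bounds the summed variance matrix directly to get the slightly looser but assumption-free $\sigma^2\leq 4(p+(k-1)q)n$; both land in the same $\mathcal{O}(pkn)$ regime and yield the claimed bound, and your explicit check that the variance term dominates the linear Bernstein correction is a useful bit of bookkeeping the paper leaves implicit.
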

\begin{proof}
    Firstly, let \[
    M_{uv} = (e_u e_u^T + e_v e_v^T) - (e_u e_v^T + e_v e_u^T),
    \]

    which has precisely four non-zero entries: $M_{uv}(u,u) = M_{uv}(v,v) = 1$ and $M_{uv}(u,v) = M_{uv}(v,u) = -1$. %It also holds that \(M_{uv}^2 = 2 M_{uv}\). 
    Now we define the random matrix
\[X_{uv} = \begin{cases}
    M_{uv} \ \text{ if } u \sim v \\
    0 \ \text{ otherwise.}
\end{cases}\]
Notice that \(L= \sum_{(u,v) \in E} X_{uv}\).
Moreover, we have that \(\|X_{uv} - \mathbb{E}(X_{uv})\| \leq 2\).

By the identity
\[
 \mathbb{E}((X_{uv} - \mathbb{E}(X_{uv}))^2) = \mathbb{E}((X_{uv})^2) - (\mathbb{E}(X_{uv}))^2, 
 \]
 and the fact that $M_{uv}^2 = 2 M_{uv}$,
 we obtain that
\[
 \mathbb{E}((X_{uv})^2) - (\mathbb{E}(X_{uv}))^2  = \begin{cases}
     2p(1-p)M_{uv} & \ \text{if } u \text{ and } v \text{ are in the same block} \\
     2q(1-q)M_{uv} & \ \text{otherwise.}
 \end{cases}
\]
Summing over all \(u,v \in V\) and taking the norm, we obtain the following:
\[\sigma^2 = \lVert\sum_{\{u,v\} \in {V\choose 2}} \mathbb{E}((X_{uv})^2) - (\mathbb{E}(X_{uv}))^2\rVert \leq 4(p+(k-1)q)n.\]

Applying Theorem~\ref{thm:Chung bound} with  \(a = 18 \sqrt{pkn \log (kn)}\) yields the statement.
% To prove statement \ref{statement2}, notice that:
%     \begin{equation*}
%     \lVert \mathbb{E}(A) - A \rVert  = \max_{x \in \mathbb{C}^N, \|x\| = 1} x^*(\mathbb{E}(A) - A)x 
% \end{equation*}

% We can rewrite this maximum as \(\max_{x,y \in \mathbb{C}^N, \|x\|=\|y\| = 1} x^*(\mathbb{E}(A) - A)y\). This term can be bounded below by restricting our choices of \(x\) and \(y\). We choose \(y=\frac{1}{\sqrt{N}}1\) to get the inequality
% \[\max_{x,y \in \mathbb{C}^N, \|x\|=\|y\| = 1} x^*(\mathbb{E}(A) - A)y \geq \max_{x} x^*(\mathbb{E}(A) - A)1\]
% Finally, we restrict \(x\) to the set of \(2N\) vectors \(\{\pm e_i\}_{i=1}^N\) (where \(e_i\) is the standard basis vector with \(1\) in the \(i\)th slot and \(0\) everywhere else). This results in the following inequality:
% \begin{align*}
%  \max_{x \in \mathbb{C}^N} x^*(\mathbb{E}(A) - A)1 & \geq \max_{x \in\{\pm e_i\}_{i=1}^N } x^*(\mathbb{E}(A) - A)1 \\
%  & = \max_{i=1, \hdots, N} |\mathbb{E}(D)_{ii} - D_{ii}|.
% \end{align*}

% As \(|\mathbb{E}(D)_{ii} - D_{ii}| = \|D -\mathbb{E}(D)\|\), this proves the statement.
% \[ \begin{pmatrix}
%     1 & -1 \\
%     -1 & 1
% \end{pmatrix} \begin{pmatrix}
%     1 & -1 \\
%     -1 & 1
% \end{pmatrix} = \begin{pmatrix}
%     1^2 + (-1)^2 & -1 \\
%     -1 & 1
% \end{pmatrix}\]
\end{proof}

The following result states the eigenvalues of the expected Laplacian in stochastic block models. Its proof is by elementary calculations.

\begin{lem} \label{eigenvalues expected laps SBMs}
Suppose that \(G \sim SBM(n,k,p,q)\) for some \(k \geq 2\). Let \(L = D - A\) be the Laplacian of \(G\) and let \(\mathcal{L} = I - D^{-1/2}AD^{-1/2}\) be the normalized Laplacian of \(G\). Then, the first eigenvalue of  \(\mathbb{E}(L)\) and \(\mathbb{E}(\mathcal{L})\) are both 0. The next \(k-1\) eigenvalues of \(\mathbb{E}(L)\) and \(\mathbb{E}(\mathcal{L})\) are \(knq\) and \(\frac{knq}{(n-1)p + (k-1)nq}\) respectively and the \((k+1)\)st eigenvalues are \(np + (k-1)nq\) and \(\frac{np + (k-1)nq}{(n-1)p + (k-1)nq}\).
\end{lem}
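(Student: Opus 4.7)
The plan is a direct calculation that exploits the Kronecker-product structure of the expected adjacency matrix. Writing $J_m$ for the $m \times m$ all-ones matrix, the off-diagonal entries of $\mathbb{E}(A)$ equal $p$ within a block and $q$ across blocks, while the diagonal entries vanish (no self-loops). Hence
\[
\mathbb{E}(A) = (p-q)(I_k \otimes J_n) + q(J_k \otimes J_n) - p\, I_{kn}.
\]
Every vertex has the same expected degree $\bar{d} \coloneq (n-1)p + (k-1)nq$, so $\mathbb{E}(D) = \bar{d}\, I_{kn}$ and therefore
\[
\mathbb{E}(L) = (\bar{d}+p)\, I_{kn} - (p-q)(I_k \otimes J_n) - q\,(J_k \otimes J_n).
\]

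Next I would simultaneously diagonalize the two commuting matrices $I_k \otimes J_n$ and $J_k \otimes J_n$ by decomposing $\mathbb{R}^{kn}$ into three joint eigenspaces: (i) $\mathrm{span}(\mathbf{1}_k \otimes \mathbf{1}_n)$, on which they act as $n$ and $kn$ respectively; (ii) $\{u \otimes \mathbf{1}_n : u \perp \mathbf{1}_k\}$ of dimension $k-1$, on which they act as $n$ and $0$; and (iii) $\{u \otimes w : w \perp \mathbf{1}_n\}$ of dimension $k(n-1)$, on which both act as $0$. Substituting into the formula for $\mathbb{E}(L)$ gives eigenvalue $0$ with multiplicity $1$ (cancellation on $\mathbf{1}_{kn}$), eigenvalue $\bar{d}+p-(p-q)n = knq$ with multiplicity $k-1$, and eigenvalue $\bar{d}+p = np+(k-1)nq$ with multiplicity $k(n-1)$. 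Sorting in non-decreasing order yields exactly the claim for $\mathbb{E}(L)$, and the $(k+1)$-st value is $np+(k-1)nq$.

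For the normalized Laplacian I would use that $\mathbb{E}(D)$ is a scalar multiple of the identity, so that in this setting $I - \mathbb{E}(D)^{-1/2}\mathbb{E}(A)\mathbb{E}(D)^{-1/2} = \bar{d}^{-1} \mathbb{E}(L)$ (this is the meaning of $\mathbb{E}(\mathcal{L})$ here, as there is no correlation between the expected degrees of different vertices). Its eigenvalues are therefore those of $\mathbb{E}(L)$ divided by $\bar{d} = (n-1)p+(k-1)nq$, giving $0$, $knq/\bar{d}$ (multiplicity $k-1$), and $(np+(k-1)nq)/\bar{d}$ (multiplicity $k(n-1)$), as claimed.

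There is no genuine obstacle — the calculation is linear algebra once the Kronecker form is written down. The only point requiring care is the diagonal correction $-p\, I_{kn}$ arising from the no-self-loop convention: it shifts the eigenvalues of $\mathbb{E}(L)$ upward by $p$, which is the reason the $(k+1)$-st eigenvalue is $np+(k-1)nq$ rather than $(n-1)p+(k-1)nq$, and one must track it consistently through both computations.
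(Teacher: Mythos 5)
Your proof is correct and is precisely the kind of ``elementary calculation'' the paper alludes to without spelling out: writing $\mathbb{E}(A) = (p-q)(I_k \otimes J_n) + q(J_k\otimes J_n) - pI_{kn}$, noting $\mathbb{E}(D)=\bar d\,I_{kn}$, and simultaneously diagonalizing via the three joint eigenspaces of $I_k\otimes J_n$ and $J_k\otimes J_n$. The only implicit assumption worth flagging is $p\ge q$, which is needed for the stated ordering $0\le knq\le np+(k-1)nq$; this is the usual assortative regime and is tacit in the paper as well.
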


%\sbm*
We are now ready to prove Corollary~\ref*{cor:sbm}.

\begin{proof}[Proof of Corollary~\ref*{cor:sbm}]
    Firstly, let \(\lambda_1(\mathbb{E}(L)) \le \dots \le \lambda_N(\mathbb{E}(L))\) denote the eigenvalues of \(\mathbb{E}(L)\) with corresponding orthonormal eigenvectors \(g_1, \hdots, g_k\). Let \(\gamma_i = \frac{g_i^* L g_i}{g_i^* g_i}\).
     Notice that \(g_1 = f_1 = \frac{1}{\sqrt{N}}1\). By Corollary~\ref*{cor:RemoveFirstEvec}, we have that
       \[\sum_{i=2}^k \|\hat{f}_i - g_i\|^2 \leq \frac{\sum_{i=2}^k (\gamma_i - \lambda_2)}{ \lambda_{k+1} - \lambda_2}.\]
    We will bound the numerator of the previous expression as follows:
     \begin{align*}
            |\gamma_i - \lambda_2| & = |\gamma_i - \lambda_i(\mathbb{E}(L)) + \lambda_i(\mathbb{E}(L)) - \lambda_2| \\
            & \leq |\gamma_i - \lambda_i(\mathbb{E}(L))| + |\lambda_i(\mathbb{E}(L)) - \lambda_2|.
        \end{align*}
    The first term can be bounded above as follows:
    \[
    |\gamma_i - \lambda_i(\mathbb{E}(L))| = \|g_i^*(L - \mathbb{E}(L)) g_i\| \leq \| L - \mathbb{E}(L) \|.
    \]
By Lemma~\ref{lem:bound on L for SBMs}, with high probability, \(\|L - \mathbb{E}(L)\| \leq 18\sqrt{(pkn)\log(kn)}\).
By Corollary \ref{CF Corollary}, for all \(i\), it holds that:
\begin{equation}
\label{eq:lambdai}
 |\lambda_{i} - \lambda_i(\mathbb{E}(L)| \leq \|L - \mathbb{E}(L)\| \leq 18\sqrt{(pkn)\log(kn)}.
\end{equation}
Thus, by Lemma \ref{eigenvalues expected laps SBMs},  for \(i = 1, \hdots, k\),
\[
\lambda_i \in \left[ knq - 18\sqrt{(pkn)\log(kn)}, knq + 18\sqrt{(pkn)\log(kn)} \right]
\]
which implies
\[
|\lambda_i(\mathbb{E}(L)) - \lambda_2| = |knq - \lambda_2| \leq 18\sqrt{(pkn)\log(kn)}
\]
and
\[
\gamma_i - \lambda_2 \leq 36 \sqrt{(pkn)\log(kn)}.
\]
On the other hand, Lemma~\ref{eigenvalues expected laps SBMs} and \eqref{eq:lambdai} imply that
\[
\lambda_{k+1} \in \left[ np + (k-1)nq - 18\sqrt{(pkn)\log(kn)}, np + (k-1)nq + 18\sqrt{(pkn)\log(kn)} \right]
\]
and, therefore, 
\begin{align*}
\lambda_{k+1} - \lambda_2 &\geq (np + (k-1)nq -18 \sqrt{(pkn)\log(kn)}) - (knq + 18\sqrt{(pkn)\log(kn)}) \\
&=  n(p-q) - 36 \sqrt{(pkn)\log(kn)}.
\end{align*}
Finally, by our assumption that $p-q \ge 40 \sqrt{pk\log(kn)/n}$, it holds that
\begin{align*}
\sum_{i=2}^k \|\hat{f}_i - g_i\|^2 &\leq
\frac{\sum_{i=2}^k (\gamma_i - \lambda_2)}{ \lambda_{k+1} - \lambda_2} \\ &\leq \frac{36(k-1)\sqrt{(pkn)\log(kn)}}{n(p-q) - 36\sqrt{(pkn)\log(kn)}} \\ 
&= \mathcal{O}\left(\frac{k}{p-q} \sqrt{\frac{pk\log(kn)}{n}} \right).
\end{align*}
\end{proof}

%\subsubsection{Proof of Lemma~\ref{lem:kmeans}}

%\kmeans*

 \subsection{Section \ref*{sec:digraphs}}

%We recall ~\ref{lem:PsiBounds} and provide its proof below:

%\PsiBounds*

\subsubsection{Proof of Lemma~\ref*{lem:PsiBounds}}
\begin{proof}
    Consider the rayleigh quotient of \(\mathcal{L}\) with some vector \(x \in \mathbb{C}^N\):
\begin{align*}
    \frac{x^* \mathcal{L} x}{x^* x}&  = \frac{x^* (I - D^{-1/2} A D^{-1/2}) x}{x^* x} \\
    & = \frac{y^* (D -  A) y}{y^* D y}  \ (\text{where } y = D^{-1/2}x) \\
    & = \frac{\sum_{u \rightarrow v} |y_u|^2 - \bar{y}_u y_v \omega_k - y_u \bar{y}_v \bar{\omega_k} + |y_v|^2}{\sum_{u \rightarrow v} |y_u|^2 + |y_v|^2} \\
    & = \frac{\sum_{u \rightarrow v} |y_u - \omega_k y_v|^2}{\sum_{u \rightarrow v} |y_u|^2 + |y_v|^2}
\end{align*}
where \(\omega_k = \exp(\frac{2 \pi i}{k})\).
Selecting \(\chi_{\mathcal{S}}\)
\[
\chi_{\mathcal{S}}(u) = \begin{cases}
    \sqrt{\frac{d(u)}{\vol(V)}} \cdot \mathrm{e}^{\frac{2\pi \mathrm{i} j}{k}} \ & \text{ for } u \in S_j, \\
    0 \ & \text{ otherwise.}
    \end{cases}
\]
Then the expression resolves to
\begin{align*}
    \frac{\chi_{\mathcal{S}}^* \mathcal{L} \chi_{\mathcal{S}}}{\chi_{\mathcal{S}}^* \chi_{\mathcal{S}}} & =  \frac{1}{\text{vol}(V)} \sum_{i=1}^k \sum_{j=1}^k 2w(S_i,S_j)(1 - \cos \left(\frac{(i-(j+1))2 \pi}{k}\right)) \\
    & = \frac{4}{\text{vol}(V)} \sum_{i=1}^k \sum_{j=1}^k w(S_i,S_j) \sin^2 \left( \frac{(i-(j+1)) \pi}{k} \right) \\
    & = \frac{4}{\text{vol}(V)} \sum_{(i,j) \notin C} w(S_i,S_j) \sin^2 \left( \frac{(i-(j+1)) \pi}{k} \right) \\
    & \leq \frac{4}{\text{vol}(V)} \sum_{(i,j) \notin C} w(S_i,S_j) \\
    &  = 4 \Psi(S_1,\hdots, S_k)
\end{align*}
The second equality follows from the identity \(1 - \cos(x) = 2\sin^2(x/2)\), the third equality follows as \( \sin^2 \left( \frac{(i-(j+1)) \pi}{k} \right) = 0\) for \((i,j) \in C\), and the inequality holds because \(\sin^2(x) \leq 1\).
\newline
Now to prove the lower bound, consider the following function:
\[
y(x) = \begin{cases}
    \frac{2}{\pi}(-x-\pi) \ &  \text{for } x \in [-\pi,-\pi/2)\\
    \frac{2}{\pi}x \ & \text{for } x \in [-\pi/2, \pi/2] \\
    \frac{2}{\pi}(\pi - x) \ & \text{for } x \in (\pi/2, \pi]
\end{cases}
\]
It can be easily verified that \(\sin^2(x) \geq y(x)^2\). Considering only \((i,j) \notin C\), the smallest values of \(\sin^2\left(\frac{(i-(j+1))\pi}{k}\right)\) are attained when \(i - (j+1) = \pm 1, \pm (k-1)\). The resulting value of \(y\left( \frac{(i-(j+1)) \pi}{k} \right)^2\) in all these instances is \(\frac{4}{k^2}\). The bound follows.
\end{proof}

%\EquivalenceLambda* 

\subsubsection{Proof of Lemma~\ref*{lem:EquivalenceLambda}}
\begin{proof}
Let us first assume $\Psi_k(\mathcal{G}) = 0$. Let $S$ be a $k$-way partitioning such that $\Psi(S_1,\dots,S_k)=0$. Then, by ~\ref{lem:PsiBounds}, $\chi_{\mathcal{S}}^* \mathcal{L}\chi_{\mathcal{S}} = 0$. This  implies the if direction.

For the reverse direction, let $y$ be an eigenvector of $\mathcal{L}$ with eigenvalue $0$, i.e., $y^* \mathcal{L}y = 0$. Then, as shown in the proof of ~\ref{lem:PsiBounds},
$y^* \mathcal{L}y = \sum_{u \rightarrow v} |y_u - \omega_k y_v|^2 = 0$.
By the weak connectivity of $\mathcal{G}$, this implies there exists $\alpha \in \mathbb{C} \setminus \{0\}$ such that, for any $u$, there exists $i\in \{0,\dots,k-1\}$ such that  $y_u = \alpha \omega_k^i$. 
Let us fix such $\alpha$ and let $S_1,\dots,S_k$ be a partition of $V$ defined as follows: $S_i = \{u \in V \colon y_u = \alpha \omega_k^{i-1}\}$ for $i=1,\dots,k$. We have that $\Psi(S_1,\dots,S_k)=0$, which implies $\Psi_k(\mathcal{G}) = 0$.
\end{proof}

\subsubsection{Proof of Theorem~\ref{thm:digraph}}
%\StructureThmDigraphs*
\begin{proof}
    The first statement follows from Theorem~\ref{thm:general} using \(k=1\) and \(g_1 = \chi_{\mathcal{S}}\). The second statement then follows from directly applying Lemma~\ref{lem:PsiBounds} to the first statement.
\end{proof}

\section{Omitted experiments}

\subsection{SBMs: Additional structure}
In addition to our SBM experiment in Section~\ref{sec:general}, we have a similar experiment for an SBM with 8 clusters where one pair of clusters have a higher affinity for each other. More precisely, \(P\) is an 8 by 8 matrix with \(0.5\) along its diagonal, \(P_{12} = P_{21} = 0.3\) and every other value is set to \(0.05\). The bounds for this SBM for a varied cluster size \(n\) can be seen in Figure~\ref{fig:sbm8cluster1pair}. 
\begin{figure}[h]
    \centering
        \includegraphics[width=0.5\linewidth]{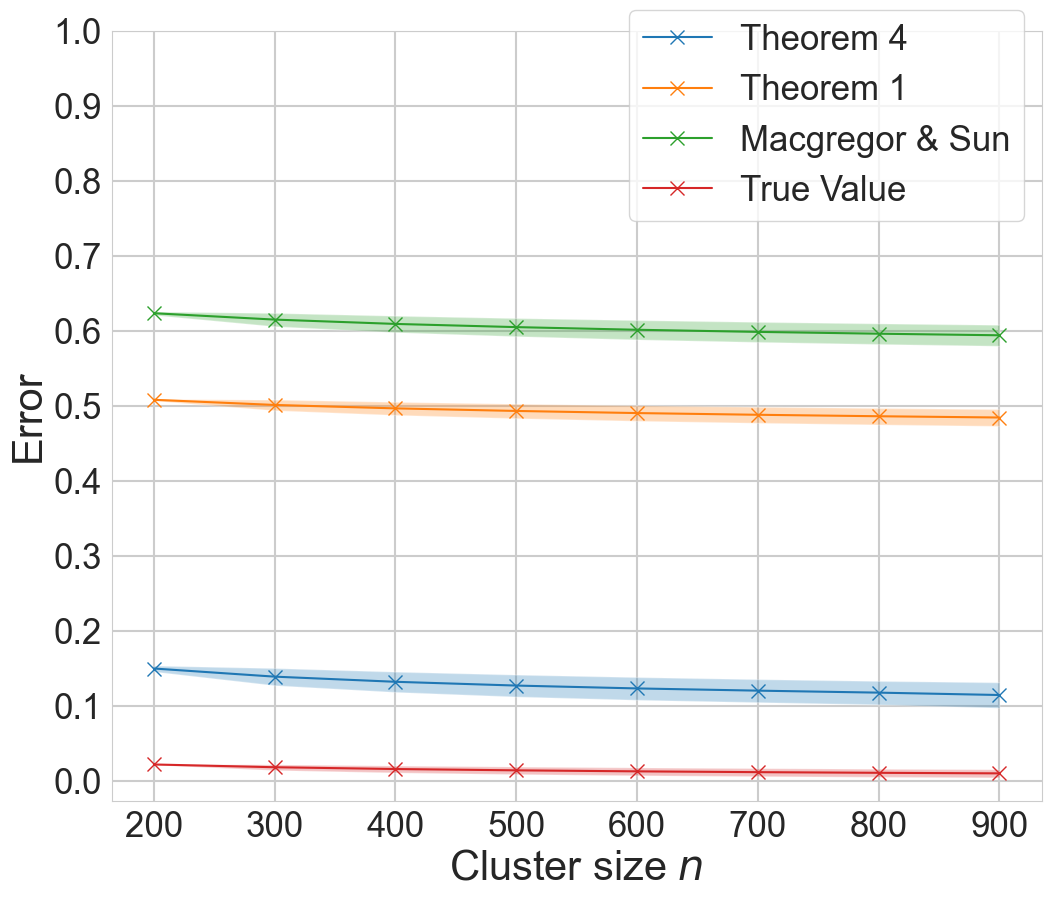}
        \caption{Stochastic block model with 8 blocks where one pair of blocks have a higher affinity to each other. Each point is an average over 10 realisations and the standard deviation is included as filled error bars.}
        \label{fig:sbm8cluster1pair}
\end{figure}

We also considered an SBM with multiple layers in its hierarchy of clusters. We took an SBM with 12 blocks that can be split into three sets of four. In each quartet, two pairs of the blocks have a higher affinity for each other. Precisely, this was the \(12 \times 12\) probability matrix \(P\) that was used:
\[
P = \begin{bmatrix}
Q & \bf{.05} & \bf{.05} \\
\bf{.05} & Q & \bf{.05} \\
\bf{.05} & \bf{.05} & Q \\
\end{bmatrix}
\]
Where 
\[Q = \begin{bmatrix}
& \cellcolor{yellow!90}.9 & \cellcolor{yellow!30}.7 & .2 & .2 & \\
& \cellcolor{yellow!30}.7 & \cellcolor{yellow!90}.9 & .2 & .2 & \\
& .2 & .2 & \cellcolor{yellow!90}.9 & \cellcolor{yellow!30}.7 & \\
& .2 & .2 & \cellcolor{yellow!30}.7 & \cellcolor{yellow!90}.9 & \\
\end{bmatrix}\]
and \(\bf{.05}\) is a \(4 \times 4\) matrix of \(0.05\). The bounds for this SBM for varied cluster size \(n\) can be seen in Figure~\ref{fig:sbm12cluster4quartets}.

\begin{figure}[h]
    \centering
    \includegraphics[width=0.5\linewidth]{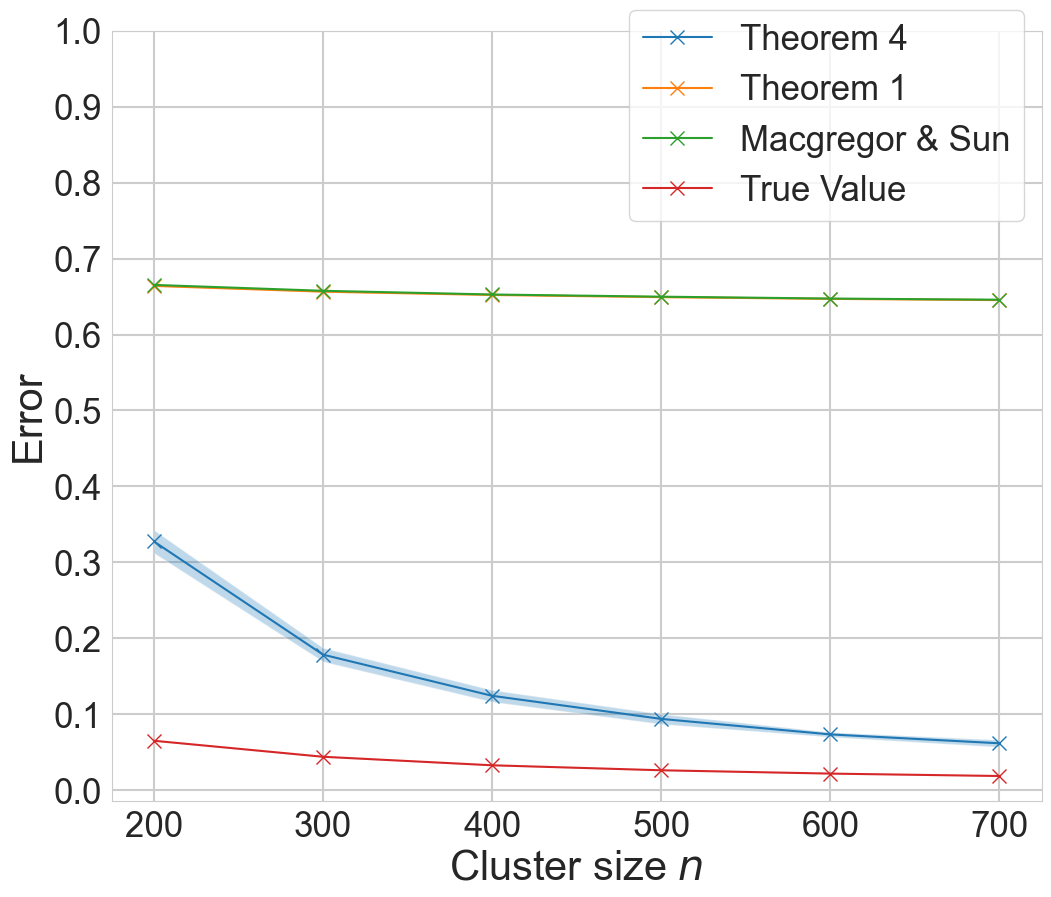}
    \caption{Stochastic block model with 12 blocks where three sets of four blocks have a higher affinity to blocks inside their quartet. Inside each quartet, two pairs of blocks have an even higher affinity to each other. Each point is an average over 5 realisations and the standard deviation is included as filled error bars.}
    \label{fig:sbm12cluster4quartets}
\end{figure}

In both experiments, we see that Theorem~\ref{thm:rec} greatly outperforms the other bounds. 

\subsection{SBMs: Close to the perfect recovery threshold}
We now consider our bounds for an SBM with 2 blocks with probability matrix
\[P = \begin{pmatrix}
    p & q \\
    q & p \\
\end{pmatrix}\]
where \(p = \frac{\alpha \log(N)}{N}, q = \frac{\beta \log(N)}{N},\) with \(\alpha\) and \(\beta\) taken at or close to the perfect recovery threshold \(\sqrt{\alpha} - \sqrt{\beta} \ge \sqrt{2}\). Details of this threshold and its significance can be found in \cite{hajek2016achieving}. We study the results of several experiments for different values of $\alpha$ and $\beta$.
In Figure~\ref{fig:ThresholdVaryingNBeta20LogScale}, we try \(\beta = 20\) with \(\alpha\) satisfying \(\sqrt{\alpha} - \sqrt{\beta} = 2\) and study the bounds for a varied cluster size \(n\). In Figure~\ref{fig:ThresholdVaryingNBeta1LogScale}, we try the same experiment with \(\beta=1\).

\begin{figure}[h!]
    \centering
    \begin{subfigure}[t]{0.48\linewidth}
        \centering
        \includegraphics[width=\textwidth]{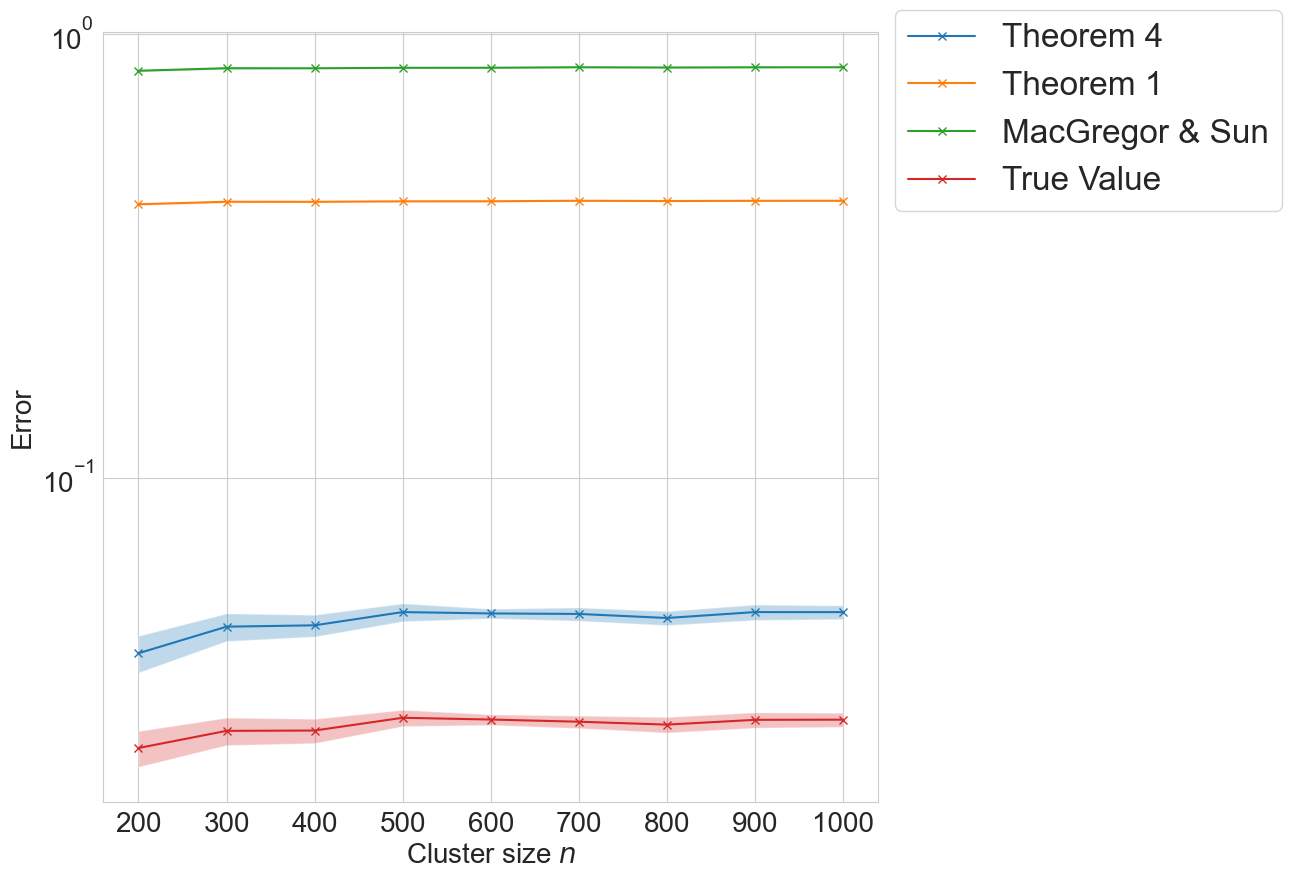}
        \vspace{0.01cm}
        \caption{\(\beta = 20\)}
        \label{fig:ThresholdVaryingNBeta20LogScale}
    \end{subfigure}
    ~
    \begin{subfigure}[t]{0.48\linewidth}
        \centering
        \includegraphics[width=\textwidth]{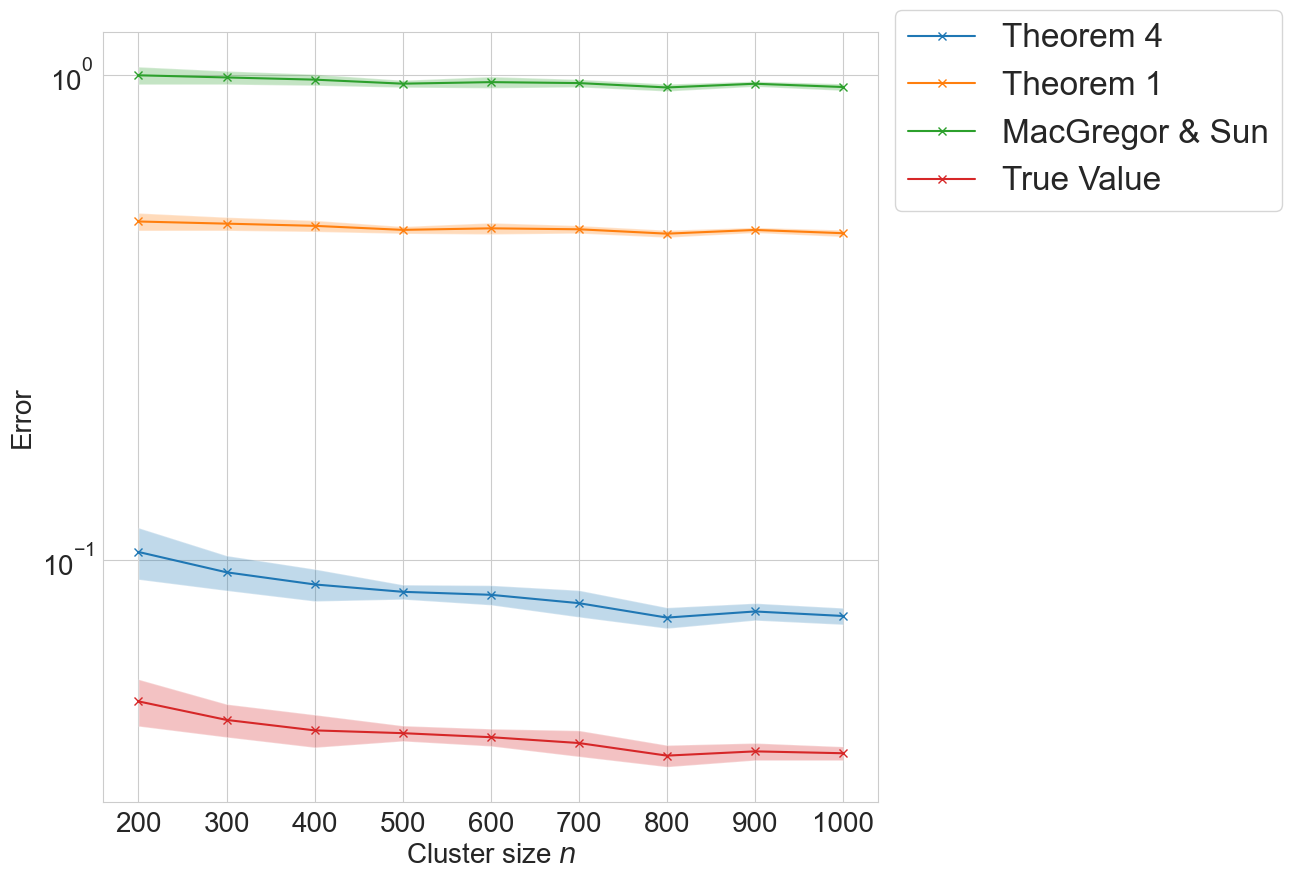}
        \vspace{0.01cm}
        \caption{\(\beta = 1\)}
        \label{fig:ThresholdVaryingNBeta1LogScale}
    \end{subfigure}
    \caption{\(\beta = 20\) (left) and \(\beta =1\) (right), \(\alpha = (\sqrt{\beta} + \sqrt{2})^2\), varying cluster size \(n\). An average is taken over 10 realisations for each value and the standard deviation is included as filled error bars.}
\end{figure}

The bounds do not vary much for our choices of \(n\), however we see that Theorem~\ref{thm:rec} provides a tighter bound on the true value than the other results.
We also considered varying \(\beta\) with \(\alpha\) satisfying \(\sqrt{\alpha} - \sqrt{\beta} = 2\) and cluster size fixed at \(n=500\). The results can be seen in Figure~\ref{fig:ThresholdVaryingBetaAtThresholdLogScale}. \

\begin{figure}[h]
    \centering
    \includegraphics[width=0.7\linewidth]{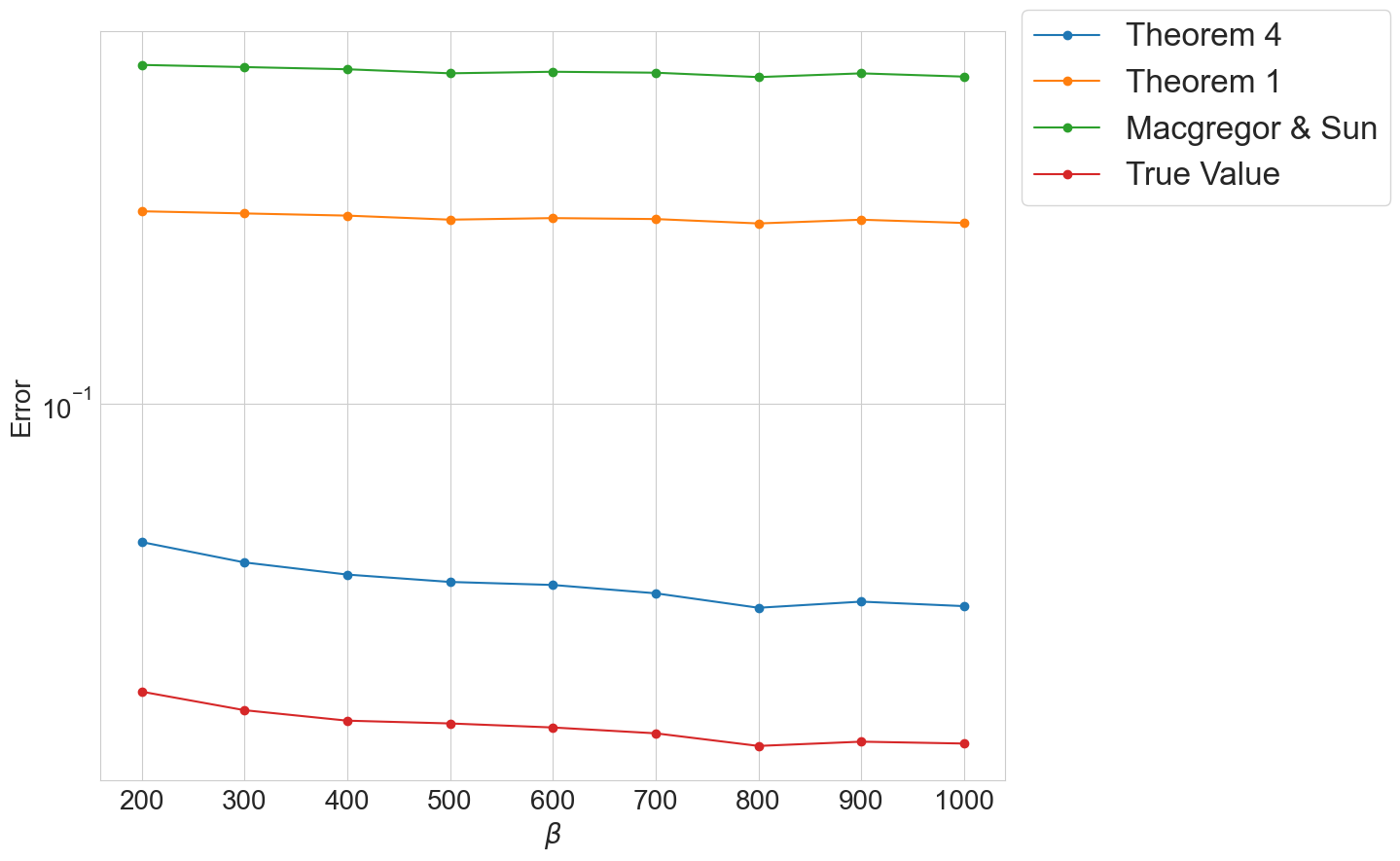}
    \caption{cluster size \(n = 500\); \(\beta\) is varied and \(\alpha\) is parametrised by \(\beta\) with \(\alpha = (\sqrt{\beta} + \sqrt{2})^2\). Each value is an average over 10 realisations and the standard deviation is included as filled error bars. }
    \label{fig:ThresholdVaryingBetaAtThresholdLogScale}
\end{figure}

We notice a similar behaviour in that the bounds do not change much at all as we vary our parameter \(\beta\). Again, the bound from Theorem~\ref{thm:rec} is much closer to the true value. Finally, we consider varying \(\beta\) for a fixed choice of \(\alpha\) (\(\alpha = 35\)). This will mean that as \(\beta\) is increased we will get closer to the threshold and eventually will be in violation of it. The results are pictured in Figure~\ref{fig:ThresholdVaryingBetaFixedAlphaAtThresholdLogScale}.

\begin{figure}[h]
    \centering
    \includegraphics[width=0.7\linewidth]{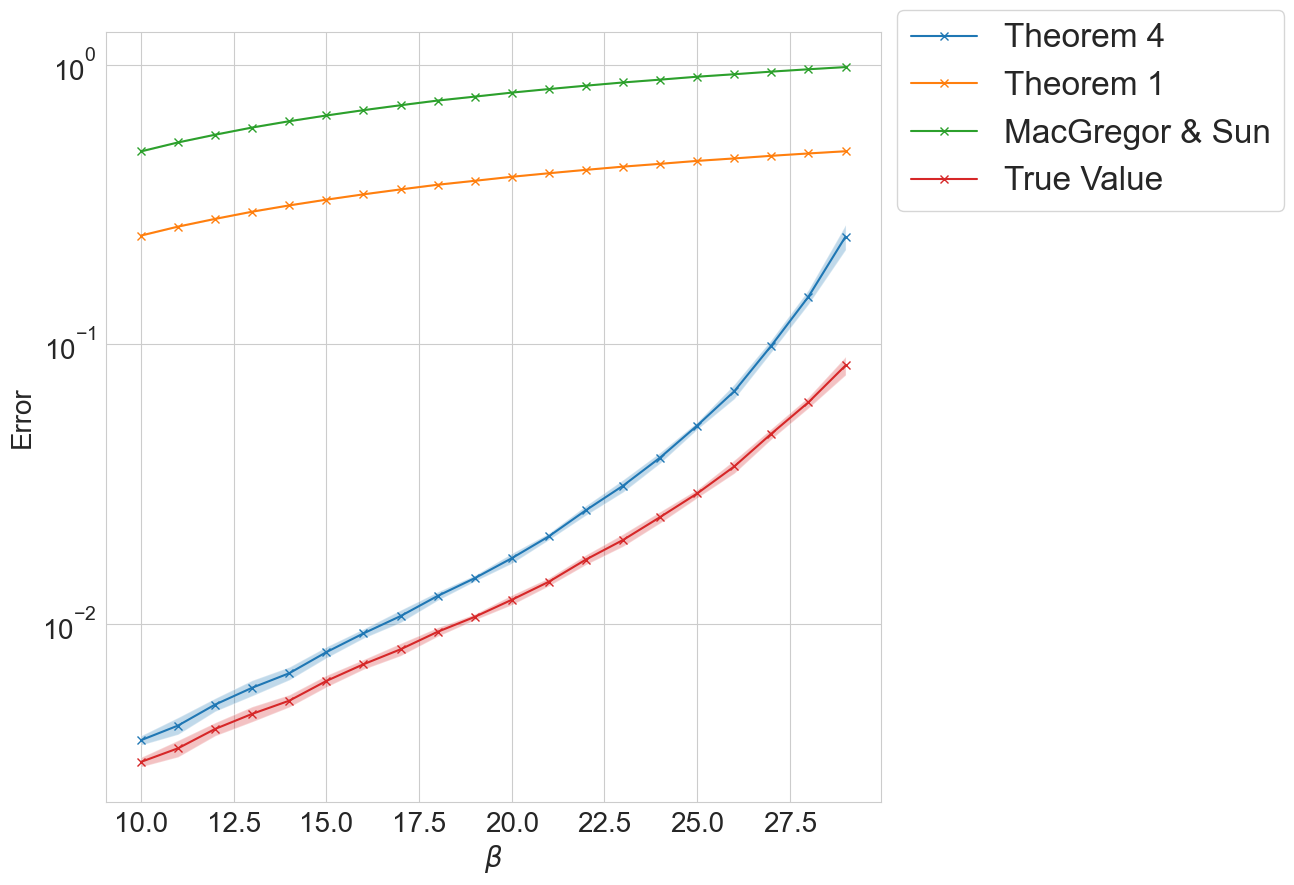}
    \caption{\(n = 500\); \(\beta\) is varied and \(\alpha\) is fixed at \(\alpha = 35\). Each value is an average over 10 realisations and the standard deviation is included as filled error bars. }
    \label{fig:ThresholdVaryingBetaFixedAlphaAtThresholdLogScale}
\end{figure}
In this experiment, we see that the true distance of the eigenvectors from the indicator vectors of the clusters grows with \(\beta\), and the bound from Theorem~\ref{thm:rec} stays relatively close to it. The other bounds are rather poor and increase slightly.

\subsection{DSBMs: Cyclic cluster structure}

Similar to our experiment in Section~\ref{sec:digraphs}, we consider a graph \(\mathcal{G} \sim \text{DSBM}(k,n,P,F)\) where \(k = 4\), \(n = 100\) and \(P\) and \(F\) are defined as follows:
\[
F = \begin{pmatrix}
    &\cellcolor{yellow!30}.5 & \cellcolor{yellow!60}1 & \cellcolor{yellow!30}.5 & \cellcolor{yellow!0}0& \\
    &\cellcolor{yellow!0}0 & \cellcolor{yellow!30}.5 & \cellcolor{yellow!60}1 & \cellcolor{yellow!30}.5& \\
    &\cellcolor{yellow!30}.5 & \cellcolor{yellow!0}0 & \cellcolor{yellow!30}.5 & \cellcolor{yellow!60}1&  \\
    &\cellcolor{yellow!60}1 & \cellcolor{yellow!30}.5 & \cellcolor{yellow!0}0 & \cellcolor{yellow!30}.5& \\
\end{pmatrix}, \ P = \begin{pmatrix}
    &\epsilon & \cellcolor{yellow!60}1 & \epsilon & \cellcolor{yellow!60}1 &\\
    &\cellcolor{yellow!60}1 & \epsilon & \cellcolor{yellow!60}1 & \epsilon &\\
    &\epsilon & \cellcolor{yellow!60}1 & \epsilon & \cellcolor{yellow!60}1 &\\
    &\cellcolor{yellow!60}1 & \epsilon & \cellcolor{yellow!60}1 & \epsilon&\\
\end{pmatrix}.
\]
As before, \(\epsilon\) is a noise parameter that we vary. This results in a graph which has a distinct cyclic cluster structure for small \(\epsilon\). The results of this experiment can be seen in Figure~\ref{fig:4Cycle}.

\begin{figure}[h]
    \centering
    \includegraphics[width=0.7\linewidth]{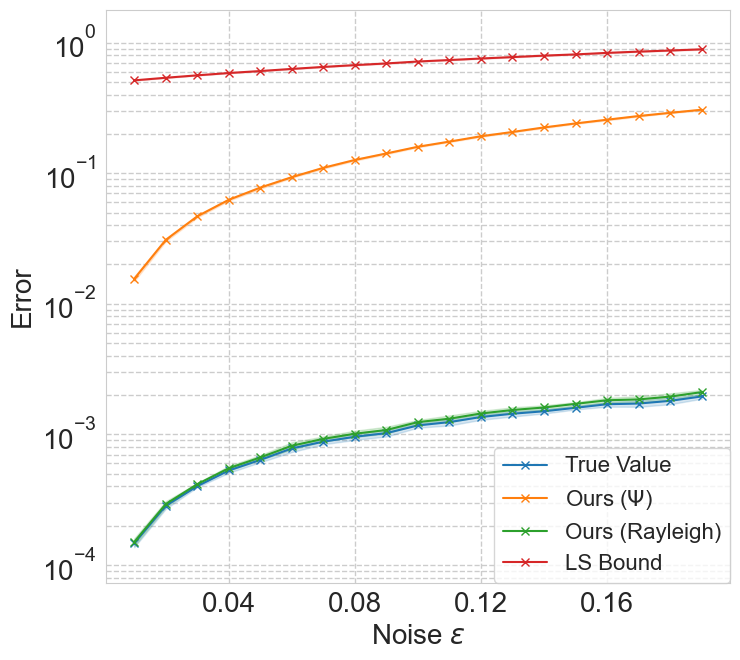}
    \caption{Comparison of the results given by the structure theorem of our paper (green for equation ~\ref{eq:ours_ray} and orange for equation ~\ref{eq:ours_psi}) and by Laenen and Sun (red) for two DSBMs at varying level of noise. The actual values are reported in blue. Each value is an average over 10 trials.}
    \label{fig:4Cycle}
\end{figure}

As one can see, our bounds significantly outperform the result of \cite{laenen2020higher}. In particular our bound from ~\ref{eq:ours_ray} hugs the true value very tightly for all noise levels.

% \subsection{Directed graph: Yellowstone food web}
% \label{app:yellowstone}

% \begin{figure}[h]
%     \centering
%     \includegraphics[width=0.7\textwidth]{Figures/YellowstoneTrophicCascade2.png}
%     \caption{A directed graph illustrating a food web for Yellowstone National Park, United States.}
%     \label{fig:YellowstoneTrophicCascade}
% \end{figure}

% We consider a small digraph representing a food web for Yellowstone National Park \citep{yellowstone_foodweb}, where vertices represent animal species and there is an edge from $u$ to $v$ if $u$ is predated by $v$.  As shown in Figure~\ref{fig:YellowstoneTrophicCascade}, this network can be partitioned into four clusters exhibiting an almost perfect directed path structure: only the outgoing edges for \emph{Mule deer} are not consistent with this structure. Indeed, $\Psi_4 = 0.027$ and Spectral Clustering using the Hermitian Laplacian perfectly recovers these clusters. Theorem~\ref{thm:digraph} suggests an error bound less than $0.086$, which is close to the actual bound \(\|f_1 - \beta \chi_{\mathcal{S}} \|^2 = 0.039\). On the other hand, Laenen and Sun's result can only obtain an upper bound of $0.662$, which is not indicative of the actual performance of Spectral Clustering.

\end{document}